\def\gbf{{\mathbf{g}}}
\def\ubf{{\mathbf{u}}}
\def\xbf{{\mathbf{x}}}
\def\ybf{{\mathbf{y}}}
\def\ebf{{\mathbf{e}}}
\def\ybf{{\mathbf{y}}}
\def\fbf{{\mathbf{f}}}
\def\sbf{{\mathbf{s}}}
\def\rbf{{\mathbf{r}}}
\def\vbf{{\mathbf{v}}}
\def\abf{{\mathbf{a}}}
\def\wbf{{\mathbf{w}}}
\def\Hbf{{\mathbf{H}}}
\def\Gbf{{\mathbf{G}}}
\def\Dbf{{\mathbf{D}}}
\def\Abf{{\mathbf{A}}}
\def\Ibf{{\mathbf{I}}}
\def\Rcal{\mathcal{R}}
\def\Dcal{\mathcal{D}}
\def\Zcal{\mathcal{Z}}
\def\H{\text{H}}
\def\R{\mathbb{R}}
\def\prox{\mathrm{prox}}
\def\grad{\nabla}
\def\divergence{\nabla\cdot}
\newcommand{\diag}[1]{\text{diag}( #1 )}
\newcommand*{\tensor}[1]{\overline{\overline{#1}}}
\newcommand*{\conj}[1]{\overline{#1}}
\theoremstyle{definition}
\newtheorem{proposition}{Proposition}
\title{Accelerated Image Reconstruction for Nonlinear Diffractive Imaging }
\author{Yanting~Ma,~\IEEEmembership{Student~Member,~IEEE},
Hassan~Mansour,~\IEEEmembership{Senior~Member,~IEEE},
Dehong~Liu,~\IEEEmembership{Senior Member,~IEEE}, Petros~T.~Boufounos,~\IEEEmembership{Senior~Member,~IEEE},
and Ulugbek~S.~Kamilov,~\IEEEmembership{Member,~IEEE}
\thanks{Y.~Ma is with the 
Department of Electrical and Computer Engineering, North Carolina State University, Raleigh, NC 27606. This work was completed while Y.~Ma was with MERL.}
\thanks{H.~Mansour, D.~Liu, and P.~T.~Boufounos 
are with Mitsubishi Electric Research Laboratories (MERL), 201 Broadway, Cambridge,
MA 02139.}
\thanks{ U.~S.~Kamilov (email: kamilov@wustl.edu) is with Computational Imaging Group (CIG), Washington University in St.~Louis, St.~Louis, MO 63130.}}
\begin{document}

\IEEEoverridecommandlockouts
\maketitle
\thispagestyle{empty}

%%%%%%%%%%%%%%%%%%%%%%%%%%%%%%%%%%%%%%%%%%%%%
%% Abstract
%%%%%%%%%%%%%%%%%%%%%%%%%%%%%%%%%%%%%%%%%%%%%

\begin{abstract}
The problem of reconstructing an object from the measurements of the light it scatters is common in numerous imaging applications. While the most popular formulations of the problem are based on linearizing the object-light relationship, there is an increased interest in considering nonlinear formulations that can account for multiple light scattering. In this paper, we propose an image reconstruction method, called CISOR, for nonlinear diffractive imaging, based on a nonconvex optimization formulation with total variation (TV) regularization. The nonconvex solver used in CISOR is our new variant of fast iterative shrinkage/thresholding algorithm (FISTA). We provide fast and memory-efficient implementation of the new FISTA variant and prove that it reliably converges for our nonconvex optimization problem. In addition, we systematically compare our method with other state-of-the-art methods on simulated as well as experimentally measured data in both 2D and 3D settings.
\end{abstract}

%%%%%%%%%%%%%%%%%%%%%%%%%%%%%%%%%%%%%%%%%%%%%
%% Keywords
%%%%%%%%%%%%%%%%%%%%%%%%%%%%%%%%%%%%%%%%%%%%%

\begin{IEEEkeywords}
Diffraction tomography, proximal gradient method, total variation regularization, nonconvex optimization
\end{IEEEkeywords}

%%%%%%%%%%%%%%%%%%%%%%%%%%%%%%%%%%%%%%%%%%%%%
%% Sections
%%%%%%%%%%%%%%%%%%%%%%%%%%%%%%%%%%%%%%%%%%%%%

%%%%%%%%%%%%%%%%%%%%%%%%%%%%%%%%%%%%%%%%%%%%%
%% Introduction
%%%%%%%%%%%%%%%%%%%%%%%%%%%%%%%%%%%%%%%%%%%%%

\section{Introduction}
\label{Sec:Intro}

%%% background about diffractive imaging

Estimation of the spatial permittivity distribution of an object from the scattered wave measurements is ubiquitous in numerous applications. Conventional methods usually rely on linearizing the relationship between the permittivity and the wave measurements. For example, the first Born approximation \cite{Born.Wolf2003} and the Rytov approximation \cite{Devaney1981} are linearization techniques commonly adopted in diffraction tomography \cite{Bronstein.etal2002,Lim.etal2015,Sung.Dasari2011,Lauer2002,Sung.etal2009,Kim.etal2014a}. Other imaging systems that are based on a linear forward model include optical projection tomography (OPT), optical coherence tomography (OCT), digital holography, and subsurface radar \cite{Sharpe.etal2002,Choi.etal2007a,Ralston.etal2006,Davis.etal2007,Brady.etal2009,Tian.etal2010,Chen.etal2015a,Jol2009,Leigsnering.etal2014a,Liu.etal2016a}.
One attractive aspect of linear methods is that the inverse problem can be formulated as a convex optimization problem and solved by various efficient convex solvers \cite{Boyd.Vandenberghe2004,Nocedal.Wright2006,Bioucas-Dias.Figueiredo2007,Beck.Teboulle2009}.
However, linear models are highly inaccurate when the physical size of the object is large compared to the wavelength of the incident wave or the permittivity contrast of the object compared to the background is high \cite{Chen.Stamnes1998}. 
Therefore, in order to be able to image strongly scattering objects such as human tissue \cite{Ntziachristos2010}, nonlinear formulations that can model multiple scattering need to be considered. The challenge is then to develop fast, memory-efficient, and reliable inverse methods that can account for the nonlinearity. Note that the nonlinearity in this work refers to the fundamental relationship between the scattered wave and the permittivity contrast, rather than that introduced by the limitations of sensing systems such as missing phase.

\begin{figure}[t]
\begin{center}
\includegraphics[width=0.49\textwidth]{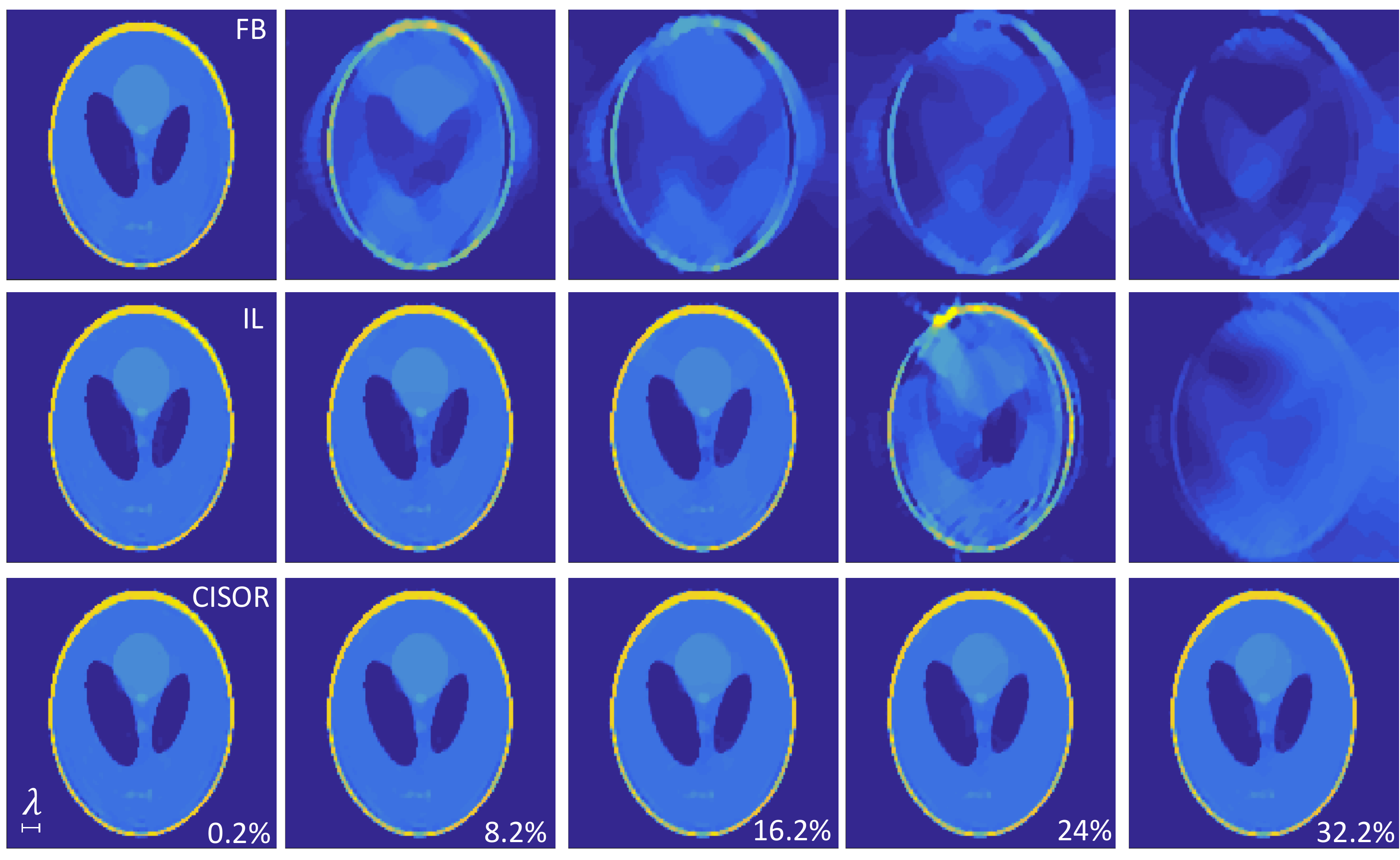}
\end{center}
\caption{Comparison between linear method with first Born approximation (FB, first row), iterative linearization method (IL, second row), and our proposed nonlinear method (CISOR, third row). Each column represents one contrast value as indicated at the bottom of the images on the third row. CISOR is stable for all tested contrast values, whereas FB and IL fail for large contrast.}
\label{fig:SheppLogan2D}
\vspace{-3mm}
\end{figure}

%%% formulation of the problem and challenge for algorithmic work

A standard way for solving inverse problems is via optimization, where a sequence of estimates is generated by iteratively minimizing a cost function. For ill-posed problems, a cost function usually consists of a quadratic data-fidelity term and a regularization term, which incorporates prior information such as transform-domain sparsity to mitigate the ill-posedness.  Total variation (TV) \cite{Rudin.etal1992} is one of the most commonly used regularizers in image processing, as it captures the property of piece-wise smoothness in natural objects. The challenge of such formulation for nonlinear diffractive imaging is that the data-fidelity term is nonconvex due to the nonlinearity and that the TV regularizer is nondifferentiable.

%%% possible choices of algorithms and their limitations

For such nonsmooth and nonconvex problems, the proximal gradient method, also known as iterative shrinkage/thresholding algorithm (ISTA) \cite{Figueiredo.Nowak2003,Daubechies.etal2004,Bect.etal2004}, is a natural choice. ISTA is easy to implement and is proved to converge under some technical conditions. However, its convergence speed is slow. FISTA \cite{Beck.Teboulle2009} is an accelerated variant of ISTA, which is proved to have the optimal worst case convergence rate for convex problems. Unfortunately, its convergence analysis for nonconvex problems has not been established.

\subsection{Contributions}

In this paper, we propose a new image reconstruction method called \emph{Convergent Inverse Scattering using Optimization and Regularization (CISOR)} for fast and memory-efficient nonlinear diffractive imaging.

The key contributions of this paper are as follows:
\begin{itemize}
\item A novel nonconvex formulation that precisely models the fundamental nonlinear relationship between the scattered wave and the permittivity contrast, while enabling fast and memory-efficient implementation, as well as rigorous convergence analysis.  
\item A new relaxed variant of FISTA for solving our nonconvex problem with rigorous convergence analysis. Our new variant of FISTA may be of interest on its own as a general nonconvex solver.
\item Extension of the proposed formulation and algorithm, as well as the convergence analysis, to the 3D vectorial case, which makes our method applicable in a broad range of engineering and scientific areas.  
\end{itemize} 

\subsection{Related Work}
Many methods that attempt to integrate the nonlinearity of wave scattering have been proposed in the literature. The iterative linearization (IL) method \cite{Belkebir.etal2005,Chaumet.Belkebir} iteratively computes the forward model using the current estimated permittivity, and estimates the permittivity using the field from the previously computed forward model. Hence, each sub-problem at each iteration is a linear problem.
Contrast source inversion (CSI) \cite{vandenBerg.Kleinman1997, Abubakar.etal2005,Bevacquad.etal2017} defines an auxiliary variable called the contrast source, which is the product of the permittivity contrast and the field. CSI alternates between estimating the contrast source and the permittivity. 
Hybrid methods (HM)~\cite{Belkebir.Sentenac2003, Mudry.etal2012, Zhang.etal2016} combine IL and CSI, aiming to benefit from each of their advantages. A comprehensive comparison of these three methods can be found in the review paper \cite{Mudry.etal2012}. Recently, the idea of neural network unfolding has inspired a class of methods that updates the estimates using error backpropagation \cite{Kamilov.etal2015, Kamilov.etal2016, Kamilov.etal2016a, Liu.etal2017, Liu.etal2017b}. While such methods  can in principle model the precise nonlinearity, in practice, the accuracy may be limited by the availability of memory to store the iterates needed to perform unfolding.

Figure \ref{fig:SheppLogan2D} provides a visual comparison of the reconstructed images obtained by the first Born (FB) linear approximation \cite{Born.Wolf2003}, the iterative linearization (IL) method \cite{Belkebir.etal2005,Chaumet.Belkebir}, and our proposed nonlinear method CISOR; detailed experimental setup will be presented in Section \ref{Sec:ExperimentalResults}. We can see that when the contrast of the object is small, all methods achieve similar reconstruction quality. As the contrast value increases, the performance of the linear method degrades significantly and the iterative linearization method only succeeds up to a certain level, whereas our nonlinear method CISOR provides reliable reconstruction for all tested contrast values.

While we were concluding this manuscript, we became aware of very recent related work in \cite{Soubies.etal2017}, which considered a similar problem as in this paper. Our work differs from \cite{Soubies.etal2017} in the following aspects: (\emph{i}) Only the 2D case has been considered in \cite{Soubies.etal2017}, whereas our work extends to the 3D vectorial case. (\emph{ii}) FISTA \cite{Beck.Teboulle2009}, which does not have convergence guarantee for nonconvex problems, is applied in \cite{Soubies.etal2017} as the nonconvex solver, whereas our method applies our new variant of FISTA with rigorous convergence analysis established here.

\section{Problem Formulation}
\label{Sec:formulation}

\begin{figure}[t!]
\centering
\includegraphics[width=8.5cm]{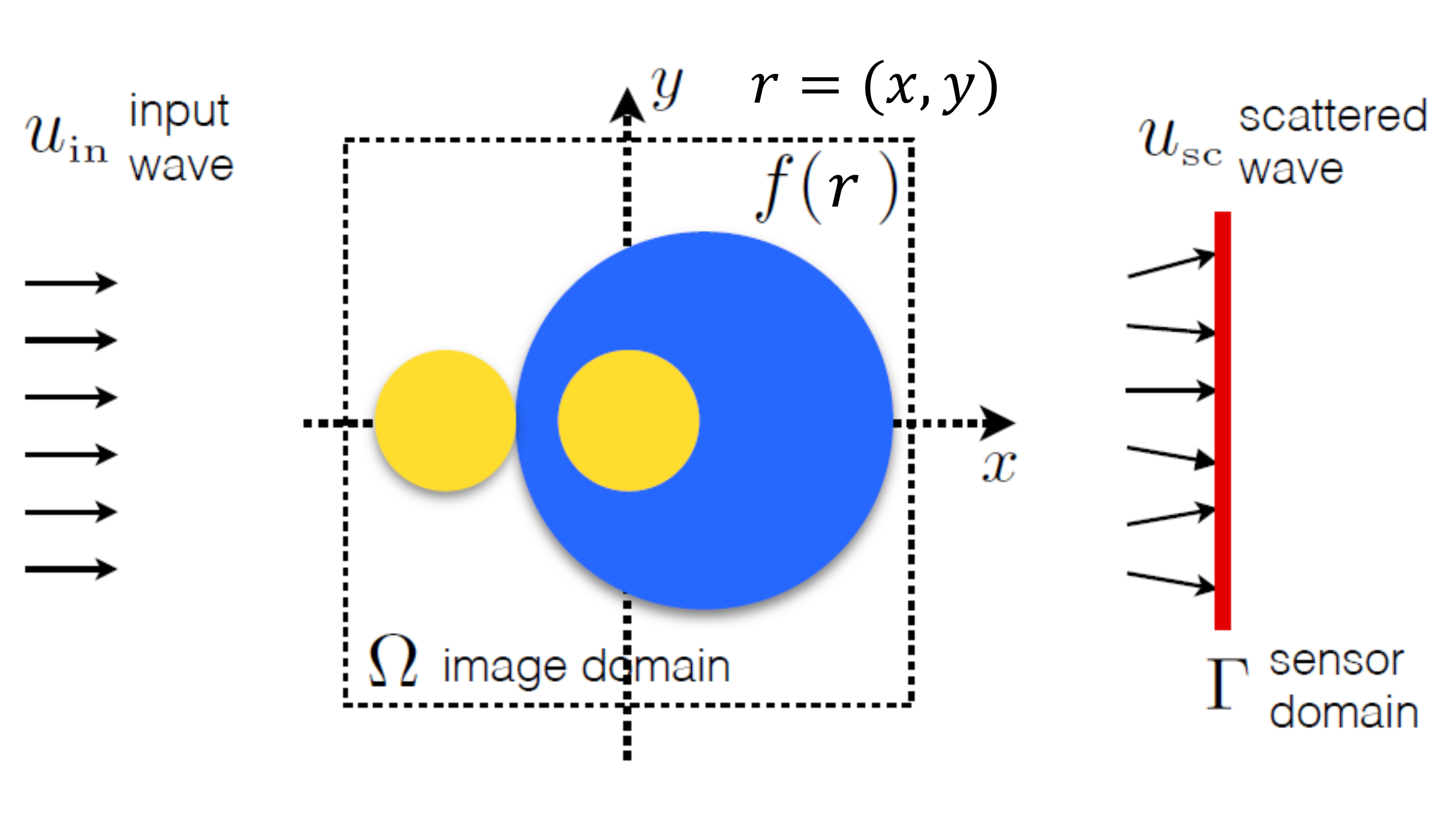}
\caption{Visual representation of the measurement scenario considered in this paper. An object with a real permittivity contrast $f(\rbf)$ is illuminated with an input wave $u_{\text{in}}(\rbf)$, which interacts with the object and results in the scattered wave $u_{\text{sc}}$ at the sensor domain $\Gamma \subset \R^2$. The complex scattered wave is captured at the sensor and the algorithm proposed here is used for estimating the contrast $f$.}
\label{fig:SetUp}
\vspace{-3mm}
\end{figure}

The problem of inverse scattering based on the scalar theory of diffraction \cite{Born.Wolf2003,Goodman1996} is described as follows and illustrated in Figure~\ref{fig:SetUp}; the formulation for the 3D vectorial case is presented in Appendix \ref{app:vectorialCase}. Let $d=2$ or $3$,
suppose that an object is placed within a bounded domain $\Omega\subset\mathbb{R}^d$. The object is illuminated by an incident wave $u_\text{in}$, and the scattered wave $u_{\text{sc}}$ is measured by the sensors placed in a sensing region $\Gamma\subset\mathbb{R}^d$. Let $u$ denote the total field, which satisfies $u(\rbf)=u_{\text{in}}(\rbf)+u_{\text{sc}}(\rbf), \forall \rbf\in\mathbb{R}^d$. The scalar Lippmann-Schwinger equation \cite{Born.Wolf2003} establishes the relationship between wave and permittivity contrast:
\begin{equation*}
u(\rbf) = u_{\text{in}}(\rbf) + k^2\int_{\Omega} g(\rbf-\rbf') u(\rbf') f(\rbf') d\rbf',\quad \forall \rbf \in \mathbb{R}^d.
\end{equation*}
In the above, $f(\rbf)=(\epsilon(\rbf)-\epsilon_b)$ is the permittivity contrast, where $\epsilon(\rbf)$ is the permittivity of the object, $\epsilon_b$ is the permittivity of the background, and $k=2\pi/\lambda$ is the wavenumber in vacuum. We assume that $f$ is real, or in other words, the object is lossless. The free-space Green's function is defined as follows:
\begin{equation}
g(\rbf)= 
\begin{cases}
-\frac{j}{4}H_0^{(1)}(k_b \|\rbf\|),&\text{if } d=2\\
\frac{e^{jk_b\|\rbf\|}}{4\pi\|\rbf\|}, &\text{if } d=3 
\end{cases},\label{eq:scalarGreens}
\end{equation}
where $\|\cdot\|$ denotes the Euclidean norm, $H_0^{(1)}$ is the Hankel function of first kind, and $k_b=k\sqrt{\epsilon_b}$ is the wavenumber of the background medium. The corresponding discrete system is then
\begin{align}
\ybf &= \Hbf \diag{\ubf}\fbf + \ebf,\label{eq:discrete2D_measurement}\\
\ubf &= \ubf^{\text{in}} + \Gbf\diag{\fbf} \ubf, \label{eq:discrete2D_constraint}
\end{align}
where $\fbf\in\mathbb{R}^N$, $\ubf\in\mathbb{C}^N$, $\ubf^{\text{in}}\in\mathbb{C}^N$ are $N$ uniformly spaced samples of $f(\rbf)$, $u(\rbf)$, and $u_{\text{in}}(\rbf)$ on $\Omega$, respectively, and $\ybf\in\mathbb{C}^M$ is the measured scattered wave at the sensors with measurement error $\ebf\in\mathbb{C}^M$. For a vector $\abf\in\mathbb{R}^N$, $\diag{\abf}\in\mathbb{R}^{N\times N}$ is a diagonal matrix with $\abf$ on the diagonal. The matrix $\Hbf\in\mathbb{C}^{M\times N}$ is the discretization of Green's function $g(\rbf-\rbf')$ with $\rbf\in\Gamma$ and $\rbf'\in\Omega$, whereas $\Gbf\in\mathbb{C}^{N\times N}$ is the discretization of Green's function with $\rbf,\rbf'\in \Omega$. The inverse scattering problem is then to estimate $\fbf$ given $\ybf$, $\Hbf$, $\Gbf$, and $\ubf^{\text{in}}$. 
Define
\begin{equation}
\Abf:=\Ibf - \Gbf\diag{\fbf}.
\label{eq:A_def}
\end{equation}
Note that \eqref{eq:discrete2D_measurement} and \eqref{eq:discrete2D_constraint} define a nonlinear inverse problem, because $\ubf$ depends on $\fbf$ through $\ubf = \Abf^{-1}\ubf^{\text{in}}$, where $\Abf$ is defined in \eqref{eq:A_def}. In this paper, the linear method refers to the formulation where $\ubf=\ubf^{\text{in}}$, and the iterative linearization method replaces $\ubf$ in \eqref{eq:discrete2D_measurement} with the estimate $\widehat{\ubf}$ computed from \eqref{eq:discrete2D_constraint} using the current estimate $\widehat{\fbf}$ and assumes that $\widehat{\ubf}$ is a constant with respect to $\fbf$.

To estimate $\fbf$ from the nonlinear inverse problem \eqref{eq:discrete_measurement} and \eqref{eq:discrete_constrain}, we consider the following nonconvex optimization formulation. Define
\begin{equation}
\mathcal{Z}(\fbf):=\Hbf\diag{\ubf}\fbf,\label{eq:Z_def}
\end{equation}
which is the (clean) scattered wave from the object with permittivity contrast $\fbf$.
Moreover, let $\mathcal{C}\subset\mathbb{R}^N$ be a bound convex set that contains all possible values that $\fbf$ may take.
Then $\fbf$ is estimated by minimizing the following composite cost function with a nonconvex data-fidelity term $\Dcal(\fbf)$ and a convex regularization term $\Rcal(\fbf)$:
\begin{equation}
\fbf^* = \arg\min_{\fbf\in\mathbb{R}^N} \left\{\mathcal{F}(\fbf):=\Dcal(\fbf) + \Rcal(\fbf)\right\},
\label{eq:objective_func}
\end{equation}
where
\begin{align}
\Dcal(\fbf)&=\frac{1}{2}\|\ybf - \mathcal{Z}(\fbf)\|_2^2\label{eq:D_def}, \\
\Rcal(\fbf)&= \tau \sum_{n=1}^N \sqrt{\sum_{d=1}^2 | [\Dbf_d \fbf]_n|^2} + \chi_\mathcal{C}(\fbf).\label{eq:R_def}
\end{align}
In \eqref{eq:R_def}, $\Dbf_d$ is the discrete gradient operator in the $d$th dimension, hence the first term in $\Rcal(\fbf)$ is the total variation (TV) cost, and the parameter $\tau>0$ controls the contribution of the TV cost to the total cost. The second term $\chi_\mathcal{C}(\cdot)$ is defined as
\begin{align*}
\chi_\mathcal{C}(\fbf):=\begin{cases}
0,&\text{if }\fbf\in \mathcal{C}\\
\infty, &\text{if }\fbf\not\in \mathcal{C}
\end{cases}.
\end{align*}
Note that $\mathcal{D}(\cdot)$ is differentiable if $\Abf$ is non-singular, $\mathcal{R}(\cdot)$ is proper, convex, and lower semi-continuous if $\mathcal{C}$ is convex and closed.

%%%%%%%%%%%%%%%%%%%%%%%%%%%%%%%%%%%%%%%%%%%%%
%% Proposed method
%%%%%%%%%%%%%%%%%%%%%%%%%%%%%%%%%%%%%%%%%%%%%

\section{Proposed Method}
\label{Sec:ProposedMethod}

As mentioned in Section \ref{Sec:Intro} that for a cost function like $\eqref{eq:objective_func}$, the class of proximal gradient methods, including ISTA \cite{Figueiredo.Nowak2003,Daubechies.etal2004,Bect.etal2004} and FISTA \cite{Beck.Teboulle2009}, can be applied. However, ISTA is empirically slow and FISTA has only been proved to converge for convex problems.
A variant of FISTA has been proposed in \cite{Li.Lin2015} for nonconvex optimization with convergence guarantees. This algorithm computes two estimates from ISTA and FISTA, respectively, at each iteration, and selects the one with lower objective function value as the final estimate at the current iteration. Therefore, both the gradient and the objective function need to be evaluated at two different points at each iteration. While such extra computation may be insignificant in some applications, it can be prohibitive in the inverse scattering problem, where additional evaluations of the gradient and the objective function require the computation of the entire forward model. Another accelerated proximal gradient method that is proved to converge for nonconvex problems is proposed in \cite{Ghadimi.Lan2016}, which is not directly related to FISTA for non-smooth objective functions like \eqref{eq:objective_func}.

\subsection{Relaxed FISTA}
\label{subsec:relaxedFISTA}

We now introduce our new variant of FISTA to solve \eqref{eq:objective_func}. 
Starting with some initialization $\fbf_0\in\mathbb{R}^N$ and setting $\sbf_1=\fbf_0$, $t_0=1$, $\alpha\in[0,1)$, for $k\geq 1$, the proposed algorithm proceeds as follows:
\begin{align}
\fbf_k &= \prox_{\gamma\mathcal{R}} \left(\sbf_k - \gamma \grad \Dcal(\sbf_k)\right)\label{eq:algo1}\\
t_{k+1} &= \frac{\sqrt{4t_k^2+1}+1}{2}\label{eq:algo2}\\
\sbf_{k+1} &=\fbf_k + \alpha\left(\frac{t_k-1}{t_{k+1}}\right)(\fbf_k-\fbf_{k-1})\label{eq:algo3},
\end{align}
where the choice of the step-size $\gamma$ to ensure convergence will be discussed in Section \ref{subsec:TheoreticalAnalysis}. Notice that the algorithm \eqref{eq:algo1}-\eqref{eq:algo3} is equivalent to ISTA when $\alpha=0$ and is equivalent to FISTA when $\alpha=1$. For this reason, we call it relaxed FISTA. Figure \ref{fig:compAlpha} shows that the empirical convergence speed of relaxed FISTA improves as $\alpha$ increases from $0$ to $1$. The plot was obtained by using the experimentally measured scattered microwave data collected by the Fresnel Institute \cite{Geffrin.etal2005}. Our theoretical analysis of relaxed FISTA in Section \ref{subsec:TheoreticalAnalysis} establishes convergence for any $\alpha \in [0, 1)$ with appropriate choice of the step-size $\gamma$.

\begin{figure}[t!]
\centering
\includegraphics[width=8cm]{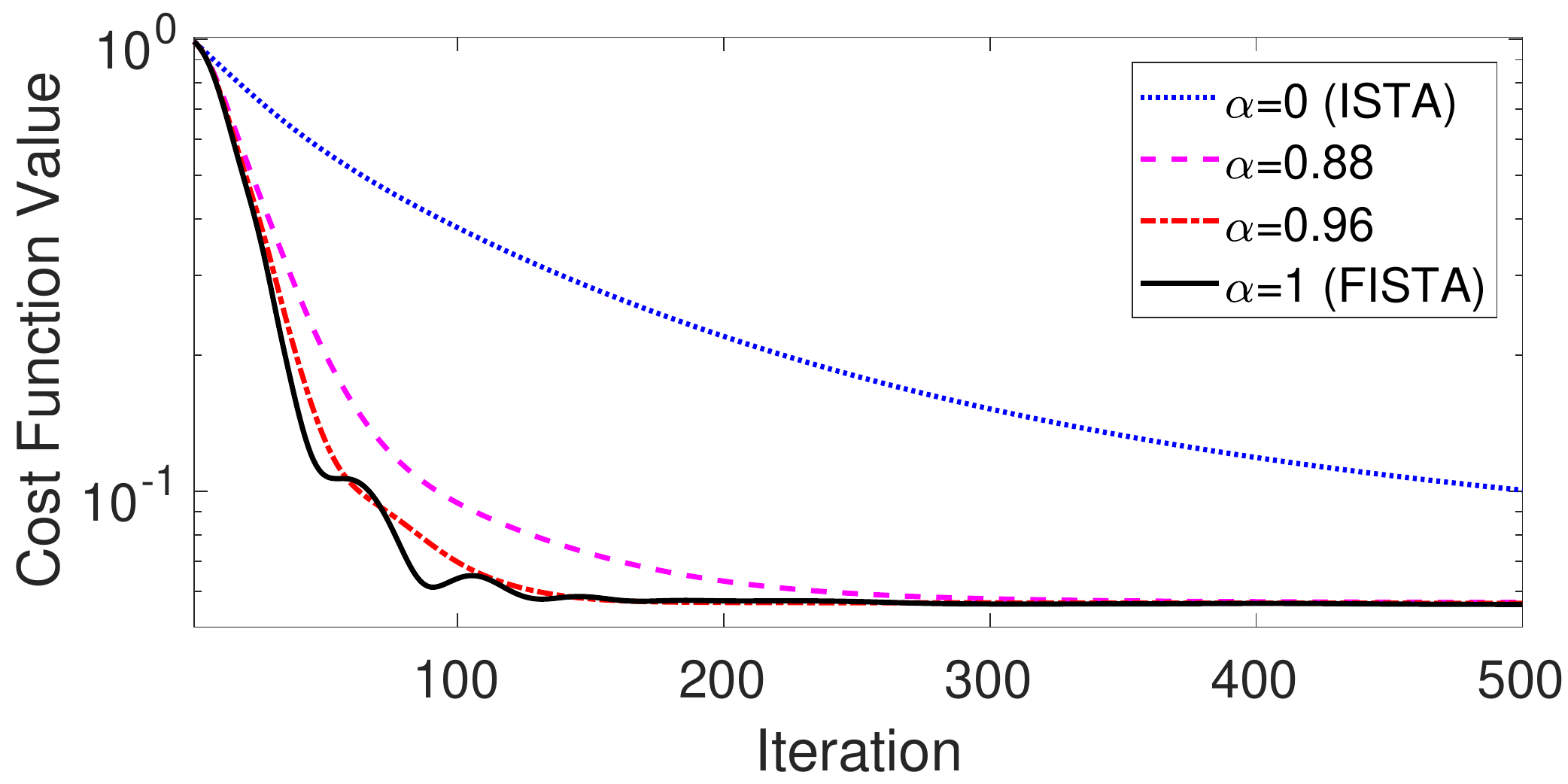}
\caption{Empirical convergence speed for relaxed FISTA with various $\alpha$ values tested on experimentally measured data.}
\label{fig:compAlpha}
\vspace{-3mm}
\end{figure}

The two main elements of relaxed FISTA are the computation of the gradient $\grad \Dcal$ and the proximal mapping $\prox_{\gamma\Rcal}$. Given $\grad\Dcal(\sbf_k)$, the proximal mapping \eqref{eq:algo1} can be efficiently solved \cite{Beck.Teboulle2009a,Kamilov2017}. 
The following proposition provides an explicit formula for $\grad\Dcal$, which enables fast and memory-efficient computation of $\grad\Dcal$.
\begin{proposition}
\label{prop:gradient}
Let $\Zcal(\fbf)$ be defined in \eqref{eq:Z_def} and $\wbf=\Zcal(\fbf)-\ybf$. Then
\begin{equation}
\grad \Dcal(\fbf) = \textsf{Re}\left\{\diag{\ubf}^\H\left(\Hbf^\H\wbf + \Gbf^\H\vbf\right)\right\}, 
\label{eq:gradient}
\end{equation}
where $\ubf$ and $\vbf$ are obtained from the linear systems
\begin{equation}
\Abf\ubf = \ubf^\text{in},\quad\text{and}\quad \Abf^\H\vbf=\diag{\fbf}\Hbf^\H\wbf.
\label{eq:u_and_v}
\end{equation}
\end{proposition}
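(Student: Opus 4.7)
The plan is to view $\mathcal{D}$ as a real-valued function of the real vector $\mathbf{f}$ that is composed through complex intermediate quantities, and compute its Wirtinger/real differential by the chain rule. Writing $\mathbf{w}(\mathbf{f}) = \mathcal{Z}(\mathbf{f})-\mathbf{y}$, I would start from $\mathcal{D}(\mathbf{f}) = \tfrac{1}{2}\mathbf{w}^{\H}\mathbf{w}$ so that the first-order variation is $d\mathcal{D} = \textsf{Re}(\mathbf{w}^{\H}\,d\mathbf{w})$. The gradient of $\mathcal{D}$ with respect to the real $\mathbf{f}$ is then the Riesz representer in $\mathbb{R}^N$ of the linear map $d\mathbf{f}\mapsto \textsf{Re}(\mathbf{w}^{\H}\,d\mathbf{w})$, i.e.\ reading off $d\mathcal{D} = \langle \grad\mathcal{D}(\mathbf{f}),\,d\mathbf{f}\rangle$.

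Next I would expand $d\mathbf{w} = d\mathcal{Z}(\mathbf{f})$ using the product rule on $\mathcal{Z}(\mathbf{f}) = \mathbf{H}\,\diag{\mathbf{u}}\,\mathbf{f}$ and the elementary identity $\diag{\mathbf{a}}\mathbf{b} = \diag{\mathbf{b}}\mathbf{a}$. This yields
\begin{equation*}
d\mathbf{w} \;=\; \mathbf{H}\,\diag{\mathbf{u}}\,d\mathbf{f} \;+\; \mathbf{H}\,\diag{\mathbf{f}}\,d\mathbf{u}.
\end{equation*}
To handle $d\mathbf{u}$, I would implicitly differentiate the constraint $\mathbf{A}\mathbf{u} = \mathbf{u}^{\text{in}}$ with $\mathbf{A} = \mathbf{I} - \mathbf{G}\diag{\mathbf{f}}$, giving $d\mathbf{A}\,\mathbf{u} + \mathbf{A}\,d\mathbf{u} = 0$ and hence, using the same $\diag{}$ swap,
\begin{equation*}
d\mathbf{u} \;=\; \mathbf{A}^{-1}\mathbf{G}\,\diag{\mathbf{u}}\,d\mathbf{f}.
\end{equation*}
Substituting this back into $d\mathbf{w}$ and then into $d\mathcal{D}$, the Euclidean gradient in the real variable $\mathbf{f}$ reads
\begin{equation*}
\grad\mathcal{D}(\mathbf{f}) \;=\; \textsf{Re}\bigl\{\diag{\mathbf{u}}^{\H}\mathbf{H}^{\H}\mathbf{w} \;+\; \diag{\mathbf{u}}^{\H}\mathbf{G}^{\H}\mathbf{A}^{-\H}\diag{\mathbf{f}}\mathbf{H}^{\H}\mathbf{w}\bigr\},
\end{equation*}
where I used that $\diag{\mathbf{f}}^{\H}=\diag{\mathbf{f}}$ because $\mathbf{f}$ is real. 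Finally, introducing the adjoint field $\mathbf{v}$ by $\mathbf{A}^{\H}\mathbf{v} = \diag{\mathbf{f}}\mathbf{H}^{\H}\mathbf{w}$ absorbs the $\mathbf{A}^{-\H}$ and collapses the expression to the stated form \eqref{eq:gradient}.

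The main obstacle is a bookkeeping one rather than a deep analytic one: keeping the real/complex distinction honest, in particular justifying the $\textsf{Re}\{\cdot\}$ (which arises only because $\mathbf{f}$ is real while $\mathbf{u}$ and $\mathbf{w}$ are complex) and correctly applying Hermitian transposes to diagonal and Green's-function matrices in the right order. A smaller point worth noting is that the derivation silently requires $\mathbf{A}=\mathbf{I}-\mathbf{G}\diag{\mathbf{f}}$ to be invertible, so that $\mathbf{u}(\mathbf{f})$ is well-defined and continuously differentiable in a neighbourhood of $\mathbf{f}$; this is already assumed in the paragraph immediately above the proposition, so the adjoint system for $\mathbf{v}$ in \eqref{eq:u_and_v} is automatically uniquely solvable and the proof goes through.
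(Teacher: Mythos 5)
Your proposal is correct and follows essentially the same route as the paper: the paper computes the Jacobian of $\Zcal$ entrywise via the chain rule and the identity $\partial \Abf^{-1}/\partial f_n=-\Abf^{-1}(\partial\Abf/\partial f_n)\Abf^{-1}$, then introduces $\vbf=\Abf^{-\H}\diag{\fbf}\Hbf^\H\wbf$ exactly as you do, so your matrix-differential phrasing is just a cleaner presentation of the identical argument. Your closing remarks about the $\textsf{Re}\{\cdot\}$ coming from the realness of $\fbf$ and the implicit invertibility of $\Abf$ are both accurate and consistent with the paper's standing assumptions.
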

\begin{proof}
See Appendix \ref{app:prop_grad_proof}.
\end{proof}
In the above, $\ubf$ and $\vbf$ can be efficiently solved by the conjugate gradient method. Note that the formulation for computing the gradient in Proposition \ref{prop:gradient} is also known as the adjoint state method \cite{Plessix2006}.
In our implementation, $\Abf$ is an operator rather than an explicit matrix, and the convolution with Green's function in $\Abf$ is computed using the fast Fourier transform (FFT) algorithm. Note that the formula for the gradient defined in \eqref{eq:gradient} is for the scalar field case. The formula for the 3D vectorial case is provided in \eqref{eq:3Dgradient} in Appendix \ref{app:vectorialCase}.

\subsection{Theoretical Analysis}
\label{subsec:TheoreticalAnalysis}

We now provide the convergence analysis of relaxed FISTA applied to our nonconvex optimization problem \eqref{eq:objective_func}.

The following proposition shows that the data-fidelity term \eqref{eq:D_def} has Lipschitz gradient on a bounded domain. Note that Lipschitz gradient of the smooth term in a composite cost function like \eqref{eq:objective_func} is essential to prove the convergence of relaxed FISTA, which we will establish in Proposition \ref{prop:converge}. 
\begin{proposition}
\label{prop:lip_grad}
Suppose that $\mathcal{U}\subset\mathbb{R}^N$ is bounded. Assume that $\|\ubf^{\text{in}}\|<\infty$ and the matrix $\Abf$ defined in \eqref{eq:A_def} is non-singular for all $\sbf\in\mathcal{U}$. Then $\Dcal(\sbf)$ has Lipschitz gradient on $\mathcal{U}$. That is, there exists an $L\in(0,\infty)$ such that 
\begin{equation}
\|\grad \Dcal(\sbf_1) - \grad \Dcal(\sbf_2)\| \leq L\|\sbf_1 - \sbf_2\|,\quad\forall \sbf_1,\sbf_2\in \mathcal{U}.
\label{eq:Lips_constant}
\end{equation}
\end{proposition}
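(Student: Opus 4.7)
\emph{Proof proposal.} The plan is to cascade Lipschitz bounds along the chain $\sbf \mapsto \Abf \mapsto \ubf \mapsto \Zcal(\sbf) \mapsto \wbf \mapsto \vbf \mapsto \grad \Dcal(\sbf)$, using the explicit formula for $\grad \Dcal$ from Proposition \ref{prop:gradient}. The strategy is standard once one knows that each link in this chain is both bounded and Lipschitz on $\mathcal{U}$, because $\grad \Dcal(\sbf)$ is then a sum of products of such maps, and products of bounded Lipschitz maps are Lipschitz.

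First I would obtain a uniform bound on $\|\Abf^{-1}\|$ over $\mathcal{U}$. Since $\mathcal{U}$ is bounded and $\sbf \mapsto \Abf(\sbf) = \Ibf - \Gbf\,\diag{\sbf}$ is affine, $\Abf$ maps $\mathcal{U}$ into a bounded set of matrices. The non-singularity assumption plus continuity of $\sbf \mapsto \Abf(\sbf)^{-1}$ (via the adjugate formula, which is continuous wherever $\det \Abf \neq 0$) gives that $\|\Abf^{-1}\|$ is bounded on the closure of $\mathcal{U}$ inside the open set where $\Abf$ is invertible; without loss of generality one may replace $\mathcal{U}$ by its closure after noting this. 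Call this bound $C_A$. Hence $\|\ubf\| = \|\Abf^{-1} \ubf^{\text{in}}\| \leq C_A \|\ubf^{\text{in}}\|$ is uniformly bounded on $\mathcal{U}$.

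Next I would show that $\sbf \mapsto \Abf^{-1}$ is Lipschitz on $\mathcal{U}$ via the resolvent identity
\begin{equation*}
\Abf_1^{-1} - \Abf_2^{-1} \;=\; \Abf_1^{-1}(\Abf_2 - \Abf_1)\Abf_2^{-1} \;=\; \Abf_1^{-1}\,\Gbf\,\diag{\sbf_1 - \sbf_2}\,\Abf_2^{-1},
\end{equation*}
so that $\|\Abf_1^{-1} - \Abf_2^{-1}\| \le C_A^2 \|\Gbf\|\,\|\sbf_1 - \sbf_2\|$. This immediately yields Lipschitz continuity and uniform boundedness of $\ubf = \Abf^{-1}\ubf^{\text{in}}$, and then of $\Zcal(\sbf) = \Hbf \diag{\ubf}\sbf$ (product of two bounded Lipschitz factors), and of $\wbf = \Zcal(\sbf) - \ybf$. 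Since $\vbf = (\Abf^{\H})^{-1} \diag{\sbf}\Hbf^{\H}\wbf$, the same reasoning combined with boundedness/Lipschitz continuity of $\sbf$ and $\wbf$ gives that $\vbf$ is bounded and Lipschitz on $\mathcal{U}$.

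Finally, plugging into \eqref{eq:gradient},
\begin{equation*}
\grad\Dcal(\sbf) \;=\; \textsf{Re}\bigl\{\diag{\ubf}^{\H}\bigl(\Hbf^{\H}\wbf + \Gbf^{\H}\vbf\bigr)\bigr\},
\end{equation*}
and each of $\ubf$, $\wbf$, $\vbf$ is a bounded Lipschitz function of $\sbf$. Expanding the difference $\grad\Dcal(\sbf_1) - \grad\Dcal(\sbf_2)$ by the usual add-and-subtract trick (for each bilinear product, bound one factor and apply Lipschitz to the other) produces the desired inequality \eqref{eq:Lips_constant} with an explicit $L$ depending only on $C_A$, $\|\Hbf\|$, $\|\Gbf\|$, $\|\ubf^{\text{in}}\|$, $\|\ybf\|$, and $\sup_{\sbf\in\mathcal{U}}\|\sbf\|$. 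The main obstacle is the first step: getting a uniform bound on $\|\Abf^{-1}\|$ from pointwise non-singularity requires a compactness argument, so one has to be careful that boundedness of $\mathcal{U}$ together with the non-singularity hypothesis is enough (passing to the closure inside the open non-singular set). Once this uniform bound is in hand, the rest reduces to routine product-rule bookkeeping.
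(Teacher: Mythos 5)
Your proposal is correct and follows essentially the same route as the paper's proof: both start from the explicit gradient formula of Proposition \ref{prop:gradient}, split the difference into the $\Hbf^\H\wbf$ and $\Gbf^\H\vbf$ contributions, use the resolvent identity $\Abf_1^{-1}-\Abf_2^{-1}=\Abf_1^{-1}(\Abf_2-\Abf_1)\Abf_2^{-1}$ to get Lipschitz continuity of $\ubf$, and finish with add-and-subtract bounds on the bilinear products. The only substantive difference is that you explicitly flag that a uniform bound on $\|\Abf^{-1}\|_{\text{op}}$ over $\mathcal{U}$ needs a compactness argument (non-singularity on the closure), whereas the paper simply asserts this boundedness; your caution here is warranted, since pointwise non-singularity on a bounded but non-closed set does not by itself yield a uniform bound.
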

\begin{proof}
See Appendix \ref{app:prop_lipschitz_proof}.
\end{proof}
Notice that all $\fbf_k$ obtained from \eqref{eq:algo1} are within a bounded set $\mathcal{C}$, and each $\sbf_{k+1}$ obtained from \eqref{eq:algo3} is a linear combination of $\fbf_k$ and $\fbf_{k-1}$, where the weight $\alpha\left(\frac{t_k-1}{t_{k+1}}\right)\in[0,1)$ since $\alpha\in[0,1)$ and $\frac{t_k-1}{t_{k+1}}\leq 1$ by \eqref{eq:algo2}. Hence, the set that covers all possible values for $\{\fbf_k\}_{k\geq 0}$ and $\{\sbf_k\}_{k\geq 1}$ is bounded. Using this fact, we have the following convergence guarantee for relaxed FISTA applied to solve \eqref{eq:objective_func}.

\begin{proposition}
\label{prop:converge}
Let $\mathcal{U}$ in Proposition \ref{prop:lip_grad} be the bounded set that covers all possible values for $\{\fbf_k\}_{k\geq 0}$ and $\{\sbf_k\}_{k\geq 1}$ obtained from \eqref{eq:algo1} and \eqref{eq:algo3}, $L$ be the corresponding Lipschitz constant defined in \eqref{eq:Lips_constant}. Choose $0<\gamma\leq \frac{1-\alpha^2}{2L}$ for any fixed $\alpha\in [0,1)$. 
Define the gradient mapping as
\begin{equation}
\mathcal{G}_{\gamma}(\fbf):=\frac{\fbf - \prox_{\gamma\Rcal}\left(\fbf - \gamma\grad \Dcal(\fbf)\right)}{\gamma}, \quad \forall \fbf \in\mathbb{R}^N.
\label{eq:gradient_mapping_def}
\end{equation}
Then, relaxed FISTA achieves the stationary points of the cost function $\mathcal{F}$ defined in \eqref{eq:objective_func} in the sense that the gradient mapping norm satisfies
\begin{equation}
\lim_{k\rightarrow\infty}\|\mathcal{G}_{\gamma}(\fbf_k)\| = 0.
\label{eq:gradient_mapping_converge}
\end{equation}
Moreover, 
Let $\mathcal{F}^*$ denote the global minimum of $\mathcal{F}$ and we assume its existence. Then for any $K>0$,
\begin{equation}
\min_{k\in\{1,\ldots,K\}} \|\mathcal{G}_{\gamma}(\fbf_k)\|^2 \leq \frac{2L(3+\gamma L)^2\left(\mathcal{F}(\fbf_0) - \mathcal{F}^*\right)}{K\gamma L(1 - \gamma L)}.\label{eq:convergence_rate}
\end{equation}
\end{proposition}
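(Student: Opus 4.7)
The plan is to establish summability of $\|\fbf_k-\sbf_k\|^2$ via a Lyapunov argument and then convert this into a rate on the gradient mapping $\mathcal{G}_\gamma(\fbf_k)$. Step one is a one-step inequality. Combining the descent lemma for $\mathcal{D}$ (which holds by Proposition~\ref{prop:lip_grad} once one observes that both $\sbf_k$ and $\fbf_k$ lie in the bounded set $\mathcal{U}$), applied at $\fbf_k$ and at $\fbf_{k-1}$ about the base point $\sbf_k$, with the three-point inequality coming from the $\tfrac{1}{\gamma}$-strong convexity of the prox subproblem (which relies on convexity of $\mathcal{R}$) evaluated at the test point $\fbf_{k-1}$, should produce
\begin{equation*}
\mathcal{F}(\fbf_k)-\mathcal{F}(\fbf_{k-1})\leq -\tfrac{1-\gamma L}{2\gamma}\|\fbf_k-\sbf_k\|^2+\tfrac{1+\gamma L}{2\gamma}\|\fbf_{k-1}-\sbf_k\|^2-\tfrac{1}{2\gamma}\|\fbf_k-\fbf_{k-1}\|^2.
\end{equation*}
The extrapolation identity $\sbf_k-\fbf_{k-1}=\beta_k(\fbf_{k-1}-\fbf_{k-2})$ with $\beta_k\defn\alpha(t_{k-1}-1)/t_k\in[0,\alpha)$ then lets the middle term be replaced by $\alpha^2\|\fbf_{k-1}-\fbf_{k-2}\|^2$.

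Step two introduces the Lyapunov functional $\Phi_k\defn\mathcal{F}(\fbf_k)+c\|\fbf_k-\fbf_{k-1}\|^2$, where $c$ is any number in the interval $[(1+\gamma L)\alpha^2/(2\gamma),\,1/(2\gamma)]$. The step-size restriction $\gamma\leq(1-\alpha^2)/(2L)$ is exactly what makes this interval nonempty (under that restriction $(1+\gamma L)\alpha^2\leq 1$ is straightforward). With such a $c$, both the $\|\fbf_{k-1}-\fbf_{k-2}\|^2$ and the $\|\fbf_k-\fbf_{k-1}\|^2$ contributions to $\Phi_k-\Phi_{k-1}$ inherit nonpositive coefficients, yielding
\begin{equation*}
\Phi_k-\Phi_{k-1}\leq -\tfrac{1-\gamma L}{2\gamma}\|\fbf_k-\sbf_k\|^2.
\end{equation*}
Setting $\fbf_{-1}\defn\fbf_0$ so that $\Phi_0=\mathcal{F}(\fbf_0)$, telescoping from $k=1$ to $K$, and using $\Phi_K\geq\mathcal{F}^*$, I obtain $\sum_{k=1}^K\|\fbf_k-\sbf_k\|^2\leq\tfrac{2\gamma}{1-\gamma L}(\mathcal{F}(\fbf_0)-\mathcal{F}^*)$.

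Step three translates this into the gradient mapping rate. By \eqref{eq:algo1}, $\gamma\mathcal{G}_\gamma(\sbf_k)=\sbf_k-\fbf_k$, so the triangle inequality gives $\gamma\|\mathcal{G}_\gamma(\fbf_k)\|\leq\|\sbf_k-\fbf_k\|+\gamma\|\mathcal{G}_\gamma(\fbf_k)-\mathcal{G}_\gamma(\sbf_k)\|$; non-expansiveness of $\prox_{\gamma\mathcal{R}}$ together with the Lipschitz property of $\grad\mathcal{D}$ (Proposition~\ref{prop:lip_grad}) bounds the latter term by $(2+\gamma L)\|\fbf_k-\sbf_k\|$, so $\gamma\|\mathcal{G}_\gamma(\fbf_k)\|\leq(3+\gamma L)\|\fbf_k-\sbf_k\|$. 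Squaring, using $\min\leq\text{average}$ over $k\in\{1,\dots,K\}$, and inserting the telescoping bound recovers \eqref{eq:convergence_rate}; in the limit $K\to\infty$, the summability of $\|\mathcal{G}_\gamma(\fbf_k)\|^2$ forces each summand to vanish, establishing \eqref{eq:gradient_mapping_converge}.

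The main obstacle is the Lyapunov construction. Without the extra $-\tfrac{1}{2\gamma}\|\fbf_k-\fbf_{k-1}\|^2$ term supplied by the three-point (strong-convexity) form of the prox inequality, a direct descent analysis would leave an ``extrapolation error'' of the form $A\beta_k^2\|\fbf_{k-1}-\fbf_{k-2}\|^2$ that is not absorbable by a $c\|\fbf_k-\fbf_{k-1}\|^2$-type potential for the full range $\alpha\in[0,1)$. The exact coefficient matching between this three-point term and the Lyapunov potential, enabled by convexity of $\mathcal{R}$ and by the hard bound $\beta_k\leq\alpha<1$, is the crux of the argument and is what permits the mild step-size condition $\gamma\leq(1-\alpha^2)/(2L)$.
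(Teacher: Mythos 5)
Your proposal is correct and follows essentially the same route as the paper: a one-step inequality combining the Lipschitz descent lemma with the prox optimality condition at test point $\fbf_{k-1}$, the extrapolation bound $\|\sbf_k-\fbf_{k-1}\|\leq\alpha\|\fbf_{k-1}-\fbf_{k-2}\|$, a telescoping Lyapunov argument in $\mathcal{F}(\fbf_k)+c\|\fbf_k-\fbf_{k-1}\|^2$ under $\gamma\leq(1-\alpha^2)/(2L)$, and the transfer from $\mathcal{G}_\gamma(\sbf_k)$ to $\mathcal{G}_\gamma(\fbf_k)$ via non-expansiveness of the prox, yielding the same $(3+\gamma L)^2$ factor. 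The only deviations are cosmetic: you anchor the descent lemma at $\sbf_k$ and invoke the three-point strong-convexity form of the prox inequality, whereas the paper uses the subgradient inequality plus the polarization identity and fixes $c=\frac{1}{2\gamma}-L$; both give the same final bound.
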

\begin{proof}
See Appendix \ref{app:prop_converge_proof}.
\end{proof}
Note that for any $\fbf\in\mathbb{R}^N$, $\mathcal{G}_{\gamma}(\fbf)=0$ implies $0\in\partial\mathcal{F}(\fbf)$, where $\partial\mathcal{F}(\fbf)$ is the limiting subdifferential \cite{Rockafellar2009} of $\mathcal{F}$ defined in \eqref{eq:objective_func} at $\fbf$, hence $\fbf$ is a stationary point of $\mathcal{F}$.

Relaxed FISTA can be used as a general nonconvex solver for any cost function that has a smooth nonconvex term with Lipschitz gradient and a nonsmooth convex term whose proximal mapping can be easily computed. The convergence analysis of relaxed FISTA does not require the estimates to be constrained on a bounded domain. The condition of bounded $\mathcal{U}$ in the statement of Proposition \ref{prop:converge} is to ensure that the gradient of the specific function $\Dcal(\fbf)$ defined in \eqref{eq:D_def} is Lipschitz, as discussed in Proposition \ref{prop:lip_grad}.

%%%%%%%%%%%%%%%%%%%%%%%%%%%%%%%%%%%%%%%%%%%%%
%% Experimental results
%%%%%%%%%%%%%%%%%%%%%%%%%%%%%%%%%%%%%%%%%%%%%

\section{Experimental Results}
\label{Sec:ExperimentalResults}

We now compare our method CISOR with several state-of-the-art methods, including iterative linearization (IL) \cite{Belkebir.etal2005,Chaumet.Belkebir}, contrast sourse inversion (CSI) \cite{vandenBerg.Kleinman1997, Abubakar.etal2005,Bevacquad.etal2017}, and SEAGLE \cite{Liu.etal2017}, as well as a conventional linear method, the first Born approximation (FB) \cite{Born.Wolf2003}. All algorithms use additive total variation regularization. In our implementation, CSI uses Polak-Ribi\'{e}re conjugate gradient. CISOR uses the relaxed FISTA defined in Section\ref{subsec:relaxedFISTA} with $\alpha=0.96$ and fixed step-size $\gamma$, which is manually tuned.  The other methods use the standard FISTA \cite{Beck.Teboulle2009}, also with manually tuned and fixed step-sizes.

\begin{figure}[t!]
\centering
\includegraphics[width=8cm]{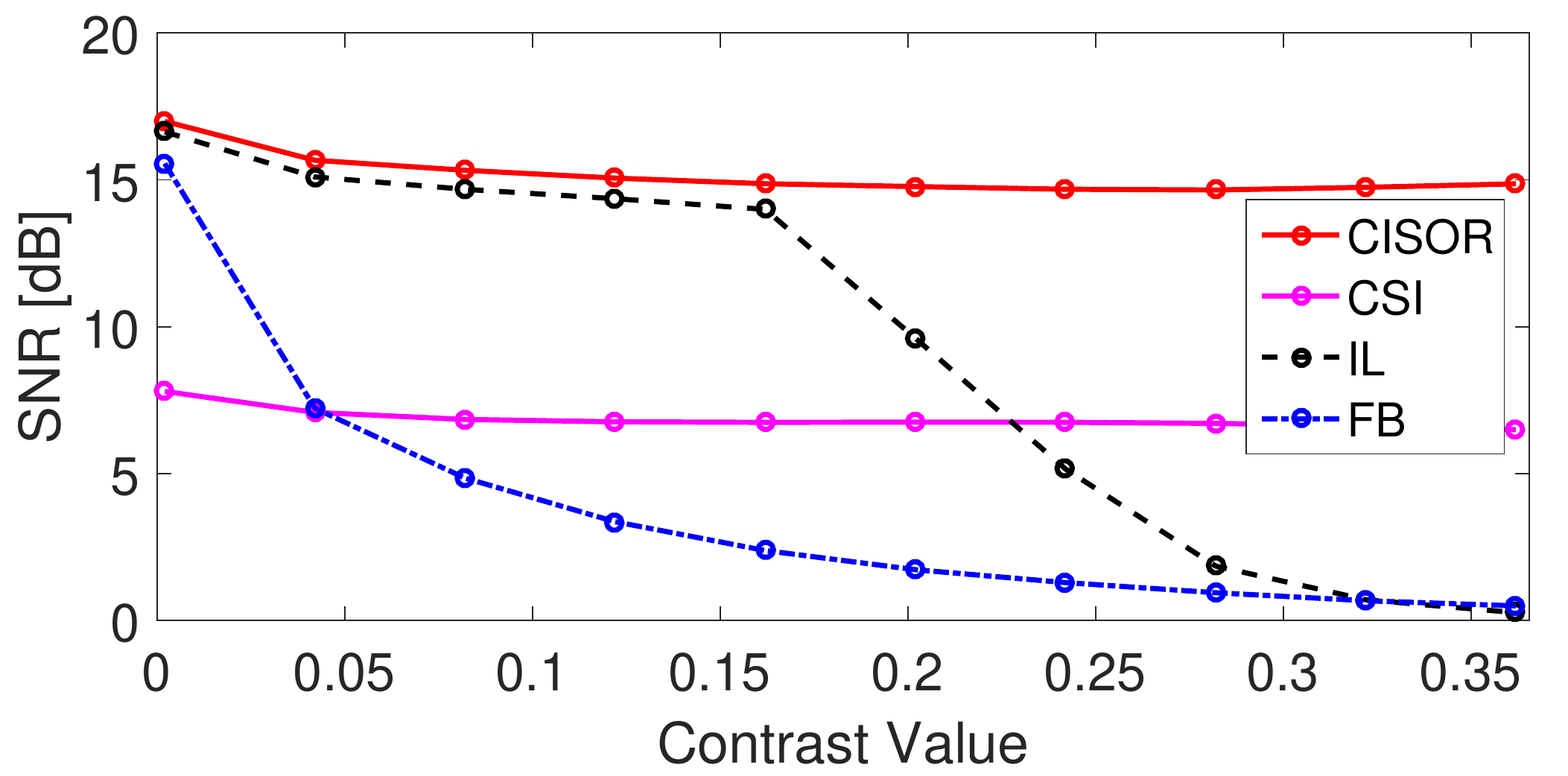}
\caption{Comparison of different reconstruction methods for various contrast levels tested on simulated data.}
\label{fig:compContrast}
\vspace{-3mm}
\end{figure}

\begin{figure*}
\centering
\includegraphics[width=16cm]{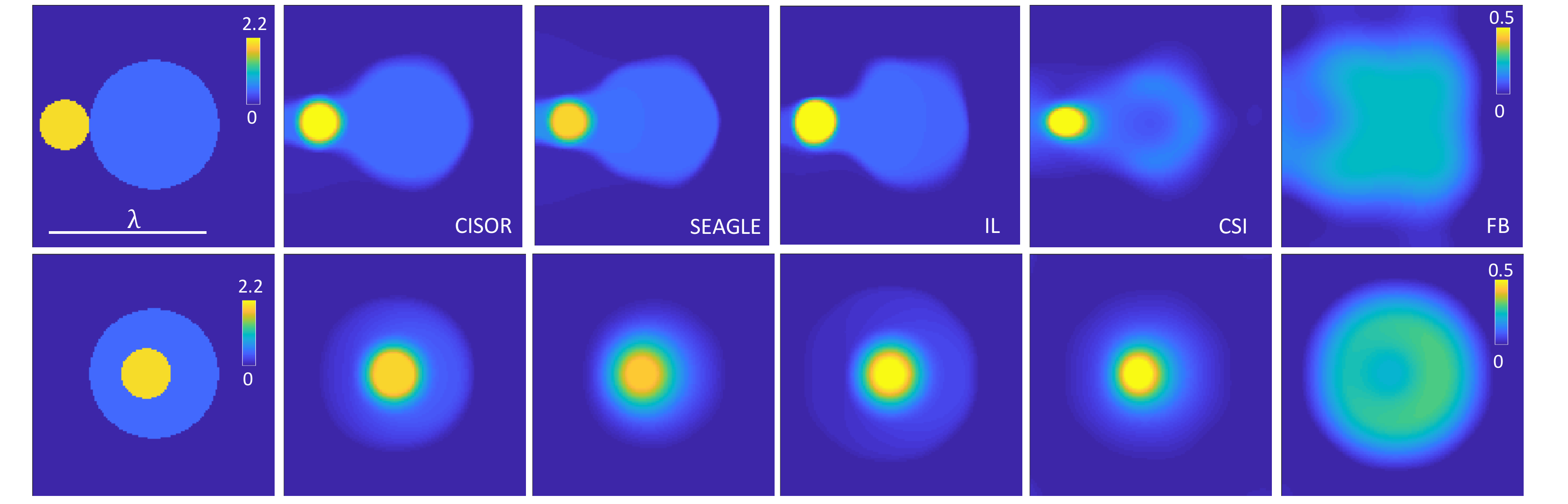}
\caption{Reconstructed images obtained by different algorithms from 2D experimentally measured data. The first and second rows use the \emph{FoamDielExtTM} and the \emph{FoamDielIntTM} objects, respectively. From left to right: ground truth, reconstructed images by CISOR, SEAGLE, IL, CSI, and FB. The color-map for FB is different from the rest, because FB significantly underestimated the contrast value. The size of the reconstructed objects are $128\times 128$ pixels.}
\label{fig:Fresnel2D}
\vspace{-3mm}
\end{figure*}

\begin{figure*}
\centering
\includegraphics[width=16cm]{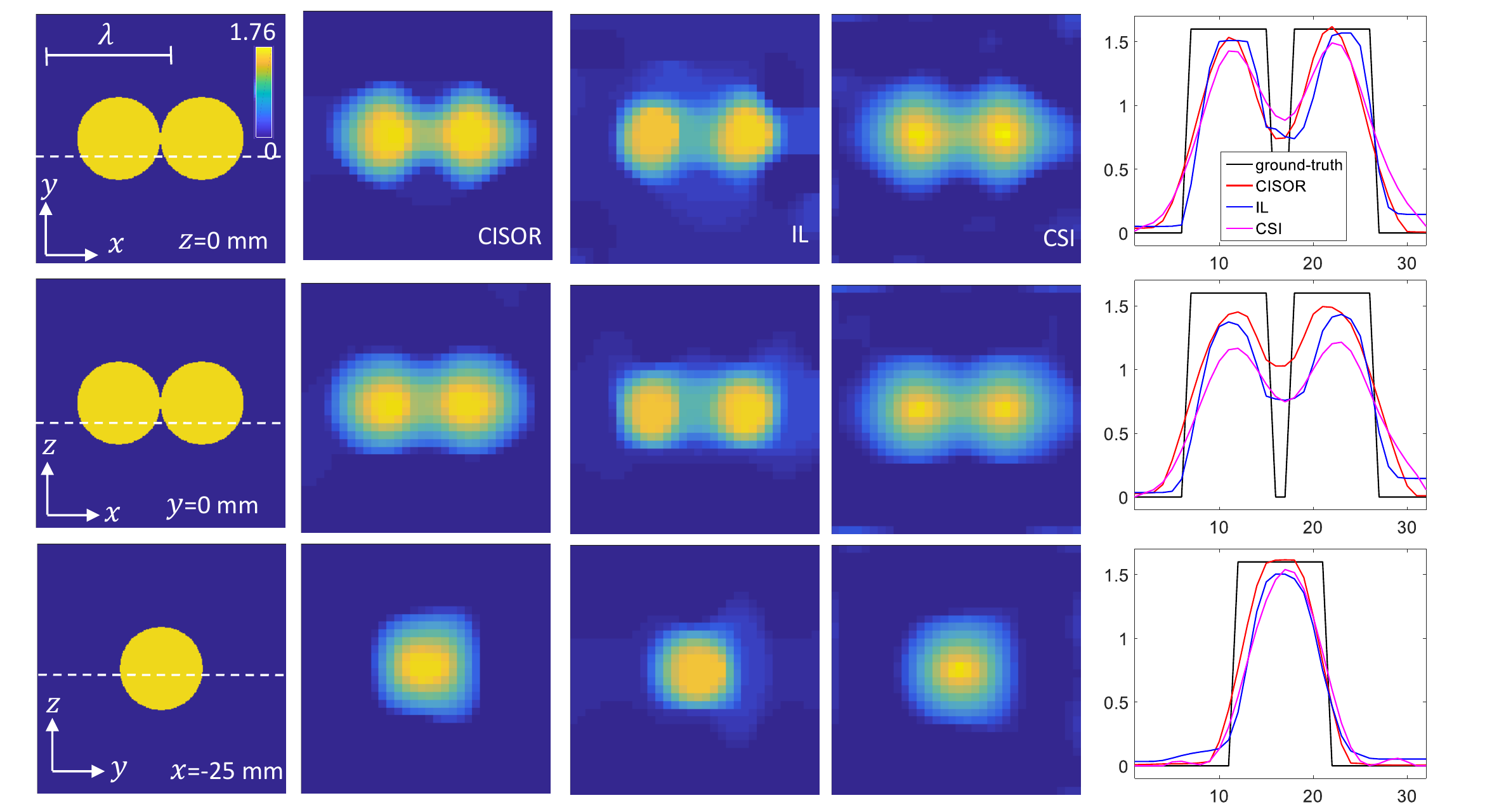}
\caption{Reconstructed images obtained by CISOR, IL, and CSI from 3D experimentally measured data for the \emph{TwoShperes} object.  From left to right: ground truth, reconstructed image slices by CISOR, IL, and CSI, the reconstructed contrast distribution along the dashed lines showing in the image slices on the first column. From top to bottom: image slices parallel to the $x$-$y$ plane with $z=0$ mm, parallel to the $x$-$z$ plane with $y=0$ mm, and parallel to the $y$-$z$ plane with $x=-25$ mm. The size of the reconstructed objects are $32\times 32\times 32$ pixels for a $150\times 150\times 150$ mm cube centered at $(0,0,0)$.} 
\label{fig:Fresnel3DTwoSpheres}
\vspace{-3mm}
\end{figure*}

\textbf{Comparison on simulated data.}
The wavelength of the incident wave in this experiment is 7.49 cm.
Define the contrast of an object with permittivity contrast distribution $\fbf$ as $\max(|\fbf|)$. We consider the Shepp-Logan phantom and change its contrast to the desired value to obtain the ground-truth $\fbf_{\text{true}}$. 
We then solve the Lippmann-Schwinger equation to generate the scattered waves that are then used as measurements. 
The center of the image is the origin and the physical size of the image is set to 120 cm $\times$ 120 cm.
Two linear detectors are placed on two opposite sides of the image at a distance of 95.9 cm from the origin. Each detector has 169 sensors with a spacing of 3.84 cm. The transmitters are placed on a line 48.0 cm to the left of the left detector, and they are spaced uniformly in azimuth with respect to the origin within a range of $[-60^\circ,60^\circ]$ at every $5^\circ$.
The reconstructed SNR, which is defined as $20\log_{10}(\|\fbf_\text{true}\|/\|\hat{\fbf}-\fbf_\text{true}\|)$, is used as the comparison criterion. The size the reconstructed images $\hat{\fbf}$ is $128\times 128$ pixels. For each contrast value and each algorithm, we run the algorithm with five different regularization parameter values and select the result that yields the highest reconstructed SNR. 

Figure \ref{fig:compContrast} shows that as the contrast increases, the reconstructed SNR of FB and IL decreases, whereas that of CSI and CISOR is more stable. While it is possible to further improve the reconstructed SNR for CSI by running more iterations, CSI is known to be slow as the comparisons shown in \cite{Mudry.etal2012}.  A visual comparison between FB, IL, and CISOR at several contrast levels has been presented in Figure \ref{fig:SheppLogan2D} at the beginning of the paper.

\begin{figure*}
\centering
\includegraphics[width=16cm]{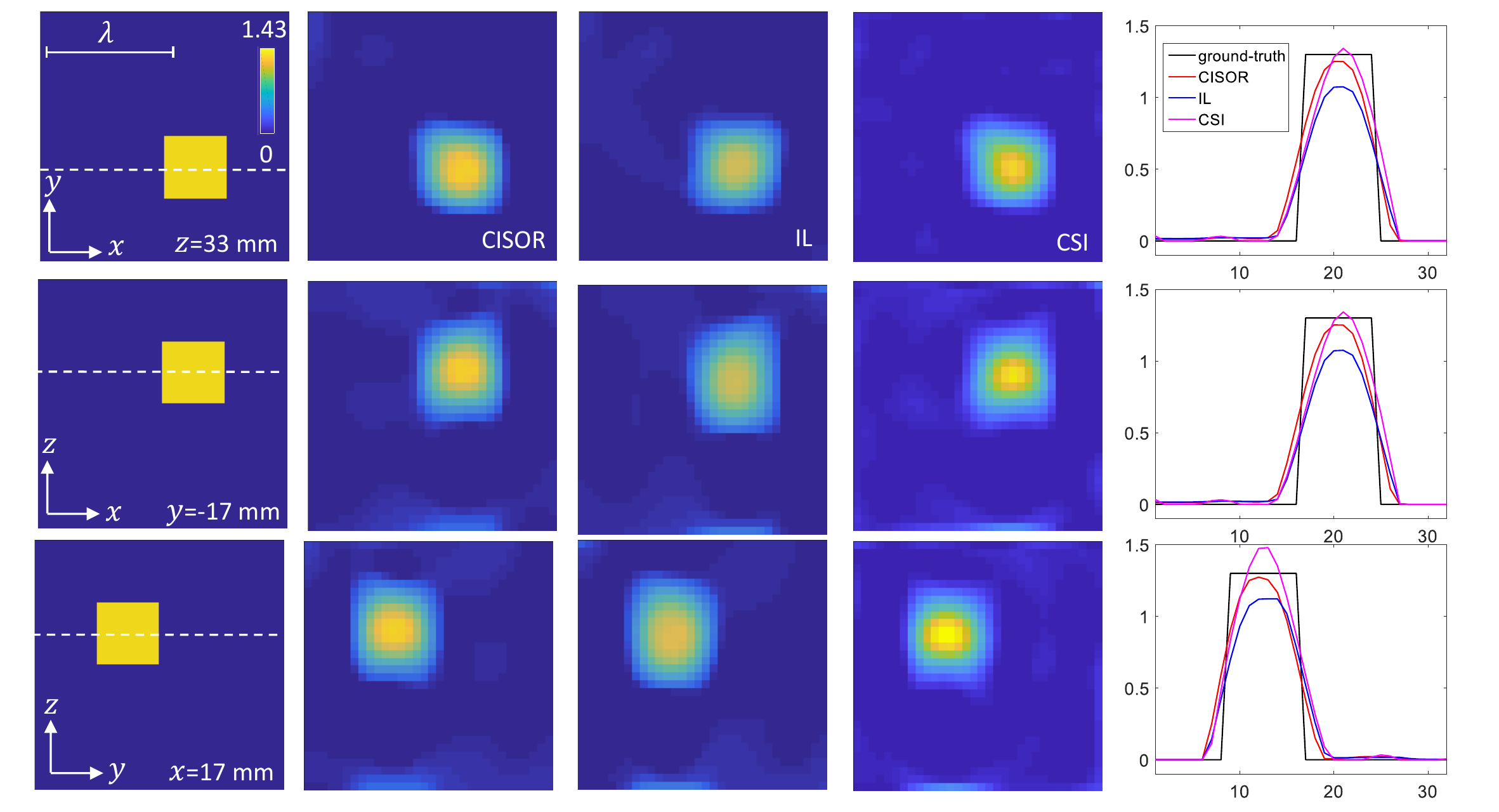}
\caption{Reconstructed images obtained by CISOR, IL, and CSI from 3D experimentally measured data for the \emph{TwoCubes} object.  From left to right: ground truth, reconstructed image slices by CISOR, IL, and CSI, the reconstructed contrast distribution along the dashed lines showing in the image slices on the first column. From top to bottom: image slices parallel to the $x$-$y$ plane with $z=33$ mm, parallel to the $x$-$z$ plane with $y=-17$ mm, and parallel to the $y$-$z$ plane with $x=17$ mm. The size of the reconstructed objects are $32\times 32\times 32$ pixels for a $100\times 100\times 100$ mm cube centered at $(0,0,50)$ mm.} 
\label{fig:Fresnel3DTwoCubes}
\vspace{-3mm}
\end{figure*}

\textbf{Comparison on experimental data.}
We test our method in both 2D and 3D settings using the public dataset provided by the Fresnel Institute \cite{Geffrin.etal2005,Geffrin.Sabouroux2009}.
Two objects for the 2D setting, \emph{FoamDielExtTM} and \emph{FoamDielintTM}, and two objects for the 3D setting, \emph{TwoSpheres} and \emph{TwoCubes}, are considered.

In the 2D setting, the objects are placed within a 150 mm $\times$ 150 mm square region centered at the origin of the coordinate system. The number of transmitters is 8 and the number of receivers is 360 for all objects. The transmitters and the receivers are placed on a circle centered at the origin with radius 1.67 m and are spaced uniformly in azimuth. Only one transmitter is turned on at a time and only 241 receivers are active for each transmitter. That is, the 119 receivers that are closest to a transmitter are inactive for that transmitter.  While the dataset contains multiple frequency measurements, we only use the ones corresponding to 3 GHz, hence the wavelength of the incident wave is 99.9 mm. The pixel size of the reconstructed images is 1.2 mm.

In the 3D setting, the transmitters are located on a sphere with radius 1.769 m. The azimuthal angle $\theta$ ranges from $20^\circ$ to $340^\circ$ with a step of $40^\circ$, and the polar angle $\phi$ ranges from $30^\circ$ to $150^\circ$ with a step of $15^\circ$.  The receivers are only placed on a circle with radius 1.769 m in the azimuthal plane with azimuthal angle ranging from $0^\circ$ to $350^\circ$ with a step of $10^\circ$. Only the receivers that are more than $50^\circ$ away from a transmitter are active for that transmitter. A visual representation of this setup is shown in Figure \ref{fig:3DSetUp} in the Appendix. We use the data corresponding to 4 GHz for the \emph{TwoSpheres} object and 6 GHz for the \emph{TwoCubes} object, hence the wavelength of the incident wave is  74.9 mm and 50.0 mm, respectively. The pixel size is 4.7 mm for the \emph{TwoSpheres} and 3.1 mm for \emph{TwoCubes}.

Figure \ref{fig:Fresnel2D} provides a visual comparison of the reconstructed images obtained by different algorithms for the 2D data. For each object and each algorithm, we run the algorithm with five different regularization parameter values and select the result that has the best visual quality. 
Figure \ref{fig:Fresnel2D} shows that all nonlinear methods CISOR, SEAGLE, IL, and CSI obtained reasonable reconstruction results in terms of both the contrast value and the shape of the object, whereas the linear method FB significantly underestimated the contrast value and failed to capture the shape. These results demonstrate that the proposed method is competitive with several state-of-the-art methods. 

Figures \ref{fig:Fresnel3DTwoSpheres} and \ref{fig:Fresnel3DTwoCubes} present the results for the \emph{TwoSpheres} and the \emph{TwoCubes} objects, respectively. Again, the results show that CISOR is competitive with the state-of-the-art methods for the 3D vectorial setting as well.

\section{Conclusion}
\label{Sec:Conclusion}

In this paper, we proposed a nonconvex formulation for nonlinear diffractive imaging based on the scalar theory of diffraction, and further extended our formulation to the 3D vectorial field setting. The nonconvex optimization problem was solved by our new variant of FISTA. We provided an explicit formula for fast computation of the gradient at each FISTA iteration and proved that the algorithm converges for our nonconvex problem. Numerical results demonstrated that the proposed method is competitive with several state-of-the-art methods. 
The advantages of CISOR over other methods are mainly in the following two aspects. 
First, CISOR is more memory-efficient due to the explicit formula for the gradient as provided in Proposition \ref{prop:gradient}. The formula allows using the conjugate gradient method to accurately compute the gradient without the need of storing all the iterates, which was required in SEAGLE \cite{Liu.etal2017b}.
Second, CISOR enjoys rigorous theoretical convergence analysis as established in Proposition \ref{prop:converge}, while other methods only have reported empirical convergence performance.

%Two key advantages of CISOR over other methods are in its memory efficiency and convergence guarantees.

%%%%%%%%%%%%%%%%%%%%%%%%%%%%%%%%%%%%%%%%%%%%%
%% Appendix
%%%%%%%%%%%%%%%%%%%%%%%%%%%%%%%%%%%%%%%%%%%%%
\section{Appendix}
\label{Sec:Appendix}

%%%%%%%%%%%%%%%%%%%%%%%%%%%%%%%%%%%%%%%%%%%%%%
%% detailed problem formulation
%%%%%%%%%%%%%%%%%%%%%%%%%%%%%%%%%%%%%%%%%%%%%%
\subsection{3D Diffractive Imaging}
\label{app:vectorialCase}

\subsubsection{Problem Formulation}
\label{app:3DFormulation}

The measurement scenario for the 3D case follows that in \cite{Geffrin.Sabouroux2009} and is illustrated in Figure \ref{fig:3DSetUp}.
The fundamental object-wave relationship in case of electromagnetic wave obeys Maxwell's equations. For time-harmonic electromagnetic field and under the Silver-M\"{u}ller radiation condition, it can be shown that the solution to Maxwell's equations is equivalent to that of the following integral equation \cite{Colton.Kress1992}:
\begin{equation}
\vec{E}(\rbf) = \vec{E}^{\text{in}}(\rbf) + (k^2\Ibf+\grad\divergence)\int_ {\Omega} g(\rbf-\rbf')f(\rbf')\vec{E}(\rbf') d\rbf', \label{eq:E_total} 
\end{equation}
which holds for all $\rbf\in\mathbb{R}^3$.
In \eqref{eq:E_total}, $\vec{E}(\rbf)\in\mathbb{C}^3$ is the electric field at spatial location $\rbf$, which is the sum of the incident field $\vec{E}^{\text{in}}(\rbf)\in\mathbb{C}^3$ and the scattered field $\vec{E}^{\text{sc}}(\rbf)\in\mathbb{C}^3$. The scalar permittivity contrast distribution is defined as $f(\rbf)=(\epsilon(\rbf)-\epsilon_b)$, where $\epsilon(\rbf)$ is the permittivity of the object, $\epsilon_b$ is the permittivity of the background, and $k=2\pi/\lambda$ is the wavenumber in vacuum. We assume that $f(\rbf)$ is real. The 3D free-space scalar Green's function $g(\rbf)$ is defined in \eqref{eq:scalarGreens}. 

\begin{figure}
\includegraphics[width=0.5\textwidth]{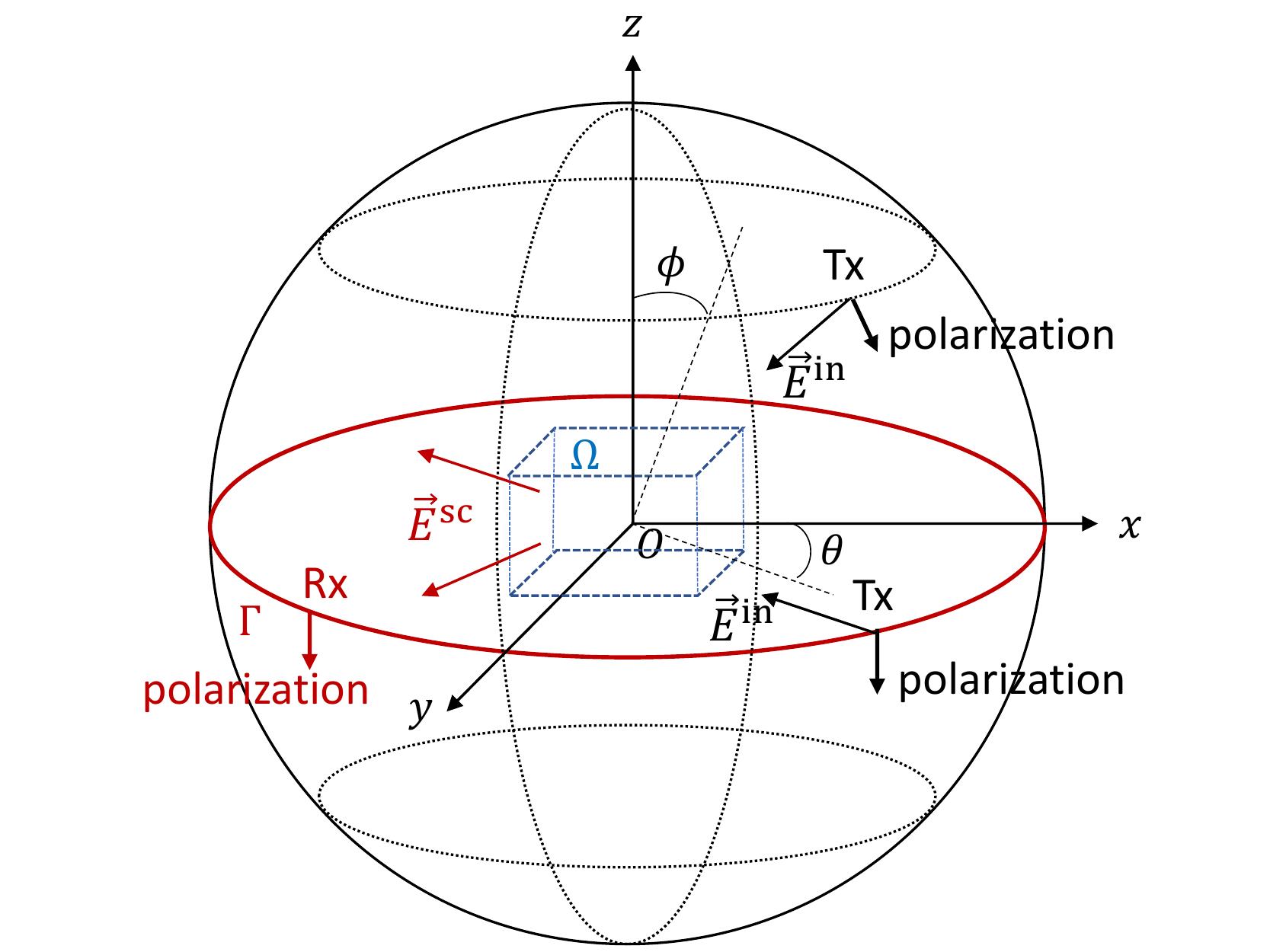}
\caption{The measurement scenario for the 3D case considered in this paper. The object is placed within a bounded image domain $\Omega$. The transmitter antennas (Tx) are placed on a sphere and are linearly polarized. The arrows in the figure define the polarization direction. The receiver antennas (Rx) are placed in the sensor domain $\Gamma$ within the $x$-$y$ (azimuth) plane, and are linearly polarized along the $z$ direction.}
\label{fig:3DSetUp}
\vspace{-3mm}
\end{figure}

As illustrated in Figure~\ref{fig:3DSetUp}, the measurements in our problem are the scattered field measured in the sensor region $\Gamma$,
\begin{equation}
\vec{E}^{\text{sc}}(\rbf) = (k^2\Ibf + \grad\divergence)\int_{\Omega} g(\rbf-\rbf')f(\rbf')\vec{E}(\rbf')d\rbf', \forall \rbf\in\Gamma, \label{eq:E_scatter1}
\end{equation}
where the total field $\vec{E}$ in \eqref{eq:E_scatter1} is obtained by evaluating \eqref{eq:E_total} at $\rbf\in\Omega$.
In the experimental setup considered in this paper, $\Gamma\cap\Omega=\emptyset$, hence Green's function is non-singular within the integral region in \eqref{eq:E_scatter1}. Therefore, we can conveniently move the gradient-divergence operator $\grad\divergence$ inside the integral. Then \eqref{eq:E_scatter1} becomes
\begin{equation}
\vec{E}^{\text{sc}}(\rbf) = \int_{\Omega} \tensor{G}(\rbf-\rbf')f(\rbf')\vec{E}(\rbf')d\rbf', \forall \rbf\in\Gamma, \label{eq:E_scatter2}
\end{equation}
where $\tensor{G}(\rbf-\rbf')=(k^2\Ibf + \grad\grad) g(\rbf-\rbf')$ is the dyadic Green's function in free-space, which has an explicit form:
\begin{align}
\tensor{G}(\rbf-\rbf')&=k^2\left(\left(\frac{3}{k^2d^2}-\frac{3j}{kd}-1\right)\left(\frac{\rbf-\rbf'}{d} \otimes \frac{\rbf-\rbf'}{d} \right)\right.\nonumber\\
&\left.+\left(1+\frac{j}{kd}-\frac{1}{k^2d^2}\right)\tensor{I}\right)g(\rbf-\rbf'),
\label{eq:dyadicGreen}
\end{align}
where $\otimes$ denotes the Kronecker product, $d=\|\rbf-\rbf'\|$, and $\tensor{I}$ is the unit dyadic.

\subsubsection{Discretization}

To obtain a discrete system for \eqref{eq:E_total} and \eqref{eq:E_scatter2}, we define the image domain $\Omega$ as a cube and  uniformly sample $\Omega$ on a rectangular grid with sampling step $\delta$ in all three dimensions. Let the center of $\Omega$ be the origin and let $\rbf_{l,s,t}:=((l-J/2-0.5)\delta,(s-J/2-0.5)\delta,(t-J/2-0.5)\delta)$ for $r,s,t=1,\ldots,J$. To simplify the notation, we use a one-to-one mapping $(l,s,t)\mapsto n$ and denote the samples by $\rbf_n$ for $n=1,\ldots,N$, where $N=J^3$.  Assume that $M$ detectors are placed at $\rbf_m$ for $m=1,\ldots,M$. Note that the detectors do not have to be placed on a regular grid, because the dyadic Green's function \ref{eq:dyadicGreen} can be evaluated at any spatial points, whereas the rectangular grid on $\Omega$ is important in order for discrete differentiation to be easily defined. Using these definitions, the discrete system that corresponds to \eqref{eq:E_scatter2} and \eqref{eq:E_total} with $\rbf\in\Omega$ can then be written as
\begin{align}
\vec{E}^{\text{sc}}(\rbf_m) &=\delta^3\sum_{n=1}^{N} \tensor{G}(\rbf_m-\rbf_n)f(\rbf_n)\vec{E}(\rbf_n), 
\label{eq:discrete0_measurement}\\
\vec{E}(\rbf_n) &=\vec{E}^{\text{in}}(\rbf_n)+(k^2+\grad\divergence)\vec{B}(\rbf_n),
\label{eq:discrete0_constrain}
\end{align}
where $m=1,\ldots,M$, $n=1,\ldots,N$, and
\begin{equation*}
\vec{B}(\rbf_n)=\delta^3\sum_{k=1}^{N} g(\rbf_n-\rbf_k)f(\rbf_k)\vec{E}(\rbf_k),\quad n=1,\ldots,N.
\end{equation*}

Let us organize $\vec{E}_n$ and $\vec{E}_n^{\text{in}}$ for $n=1,\ldots,N$ into column vectors as follows:
\begin{equation}
\widetilde{\ubf} = \left(
\begin{matrix}
E^{(1)}\\
E^{(2)}\\
E^{(3)}
\end{matrix}\right),\qquad \widetilde{\ubf}^{\text{in}} = \left(
\begin{matrix}
E^{\text{in},(1)}\\
E^{\text{in},(2)}\\
E^{\text{in},(3)}
\end{matrix}\right),
\label{eq:u_def}
\end{equation}
where $E^{(i)}\in\mathbb{C}^N$ is a column vector whose $n^\text{th}$ coordinate is $E^{(i)}_n$; similar notation applies to $E^{\text{in},(i)}\in\mathbb{C}^N$. Then the discretized inverse scattering problem is defined as follows:
\begin{align}
\widetilde{\ybf} &=\widetilde{\Hbf} \left(\Ibf_3\otimes\diag{\widetilde{\fbf}}\right)\widetilde{\ubf} + \widetilde{\ebf},\label{eq:discrete_measurement}\\
\widetilde{\ubf} & = \widetilde{\ubf}^{\text{in}} + (k^2\Ibf + \Dbf)\left(\Ibf_3\otimes (\widetilde{\Gbf}\diag{\widetilde{\fbf}})\right)\widetilde{\ubf},
\label{eq:discrete_constrain}
\end{align} 
where $\Ibf_p$ is a $p\times p$ identity matrix and we drop the subscript $p$ if the dimension is clear from the context, $\widetilde{\fbf}\in\mathbb{R}^N$ is the vectorized permittivity contrast distribution, which is assumed to be real in this work, $\Dbf\in\mathbb{R}^{3N\times 3N}$ is the matrix representation of the gradient-divergence operator $\grad\divergence$, $\widetilde{\ybf}\in\mathbb{C}^M$ is the scattered wave measurement vector with measurement noise $\widetilde{\ebf}\in\mathbb{C}^M$, and $\widetilde{\Gbf}\in\mathbb{C}^{N\times N}$ is the matrix representation of the convolution operator induced by the 3D free-space Green's function \eqref{eq:scalarGreens}.
Note that according to the polarization of the receiver antennas in the experimental setup we consider in this paper, the ideal noise-free measurements $\widetilde{y}_m-\widetilde{e}_m$ are $E^{\text{sc},(3)}_m$, for $m=1,...,M$, which are the scattered wave $\vec{E}^{\text{sc}}$ along the $z$-dimension measured at $M$ different locations in the sensor region $\Gamma$. Therefore, $\widetilde{\Hbf}\in\mathbb{C}^{M\times 3N}$ is the matrix representation of the convolution operator induced by the third row of the dyadic Green's function \eqref{eq:dyadicGreen}.
Define
\begin{equation}
\widetilde{\Abf}:=\Ibf - (k^2\Ibf + \Dbf)\left(\Ibf_3\otimes (\widetilde{\Gbf}\diag{\widetilde{\fbf}})\right).
\label{eq:3DA_def}
\end{equation}
Similar to the scalar field case, we can see that the measurement vector $\widetilde{\ybf}$ is nonlinear in the unknown $\widetilde{\fbf}$, because $\widetilde{\ubf}$ depends on $\widetilde{\fbf}$ according to $\widetilde{\ubf}=\widetilde{\Abf}^{-1}\widetilde{\ubf}^{\text{in}}$, which follows from \eqref{eq:discrete_constrain}.

Next, we define the discrete gradient-divergence operator $\grad\divergence$, for which the matrix representation is $\Dbf$ in \eqref{eq:discrete_constrain}. For a scalar function $f$ of the spatial location $\rbf_{l,s,t}$, denote $f(\rbf_{l,s,t})$ by $f_{l,s,t}$. Following the finite difference rule,
\begin{align*}
\frac{\partial^2}{\partial x^2}f_{l,s,t}&:=\frac{f_{l-1,s,t} - 2f_{l,s,t} + f_{l+1,s,t}}{\delta^2},\\
\frac{\partial^2}{\partial x\partial y}f_{l,s,t}&:=\frac{f_{l-1,s-1,t}-f_{l-1,s+1,t}}{4\delta^2}\\
&+\frac{f_{l+1,s+1,t} - f_{l+1,s-1,t}}{4\delta^2}.
\end{align*} 
The definitions of $\frac{\partial^2}{\partial x\partial z}$ and $\frac{\partial^2}{\partial y \partial z}$ are similar to that of $\frac{\partial^2}{\partial x \partial y}$. Moreover, for a vector $\vec{E}_{m,n,p}\in\mathbb{C}^3$, we use $E_{m,n,p}^{(i)}$ to denote its $i^\text{th}$ coordinate.
Then we have that the first coordinate of $\vec{E}_{m,n,p}$ is 
\begin{align*}
&E_{m,n,p}^{(1)}=E_{m,n,p}^{\text{in},(1)} + k^2 B^{(1)}_{m,n,p}\\
& + \frac{\partial}{\partial x_1}\left(\frac{\partial}{\partial x_1}B^{(1)}_{m,n,p}+ \frac{\partial}{\partial x_2}B^{(2)}_{m,n,p}+ \frac{\partial}{\partial x_3} B^{(3)}_{m,n,p}\right)\\
&=E_{m,n,p}^{\text{in},(1)} + k^2 B^{(1)}_{m,n,p}+ \frac{B_{m+1,n,p}^{(1)}-2B_{m,n,p}^{(1)}+B_{m-1,n,p}^{(1)}}{\delta^2}\\
&+\frac{B^{(2)}_{m-1,n-1,p}-B^{(2)}_{m-1,n+1,p}-B^{(2)}_{m+1,n-1,p}+B^{(2)}_{m+1,n+1,p}}{4\delta^2}\\
&+\frac{B^{(3)}_{m-1,n,p-1}-B^{(3)}_{m-1,n,p+1}-B^{(3)}_{m+1,n,p-1}+B^{(3)}_{m+1,n,p+1}}{4\delta^2}.
\end{align*}
The second and third coordinates of $\vec{E}_{m,n,p}$ can be obtained in a similar way.

\subsubsection{CISOR for 3D}
\label{app:3DCISOR}

The data-fidelity term $\Dcal(\cdot)$ in the objective function \eqref{eq:objective_func} is now defined as $\Dcal(\widetilde{\fbf}):=\frac{1}{2}\|\widetilde{y} - \widetilde{\Zcal}(\widetilde{\fbf})\|^2$, where $\widetilde{Z}(\widetilde{\fbf}):=\widetilde{\Hbf} \left(\Ibf_3\otimes\diag{\widetilde{\fbf}}\right)\widetilde{\ubf}$.
Let $\widetilde{\wbf}=\widetilde{\Zcal}(\widetilde{\fbf})-\widetilde{\ybf}$, and
\begin{equation}
\gbf = \diag{\widetilde{\ubf}}^\mathsf{H}\left( \widetilde{\Hbf}^\mathsf{H}\wbf + (\Ibf_3\otimes\widetilde{\Gbf}^\mathsf{H}) (k^2\Ibf+\Dbf^\mathsf{H})\widetilde{\vbf} \right), 
\label{eq:3Dg_def}
\end{equation}
where $\widetilde{\ubf}$ and $\widetilde{\vbf}$ are obtained from the linear systems
\begin{equation}
\widetilde{\Abf}\widetilde{\ubf} = \widetilde{\ubf}^\text{in},\quad\text{and}\quad \widetilde{\Abf}^\mathsf{H}\widetilde{\vbf}=(\Ibf_3\otimes\diag{\widetilde{\fbf}})\widetilde{\Hbf}^\mathsf{H}\widetilde{\wbf},
\label{eq:3Du_and_v}
\end{equation}
where $\widetilde{\Abf}$ is defined in \eqref{eq:3DA_def}. Then the gradient of $\Dcal$ can be written as
\begin{equation}
\grad\Dcal(\fbf) = \textsf{Re}\left\{\sum_{i=1}^3 \gbf^{(i)}\right\}
\label{eq:3Dgradient}
\end{equation}
with  $\mathbf{g}^{(i)}=(g_{(i-1)N+1},\ldots,g_{iN})\in\mathbb{C}^N$ for $i=1,2,3$.

\subsection{Proofs for Theoretical Results}
\label{app:proofs}

%%%%%%%%%%%%%%%%%%%%%%%%%%%%%%%%%%%%%%%%%%%%%
%% proof 1
%%%%%%%%%%%%%%%%%%%%%%%%%%%%%%%%%%%%%%%%%%%%%
\subsubsection{Proof for Proposition \ref{prop:gradient}}
\label{app:prop_grad_proof}
The gradient of $\Dcal(\cdot)$ defined in \eqref{eq:D_def} is $\textsf{Re}\left\{\mathbf{J}_\Zcal^\H\wbf\right\}$, where $\mathbf{J}_\Zcal$ is the Jocobian matrix of $\Zcal(\cdot)$ \ref{eq:Z_def}, which is defined as
\begin{equation*}
\mathbf{J}_{\Zcal} = \left[
\begin{array}{ccc}
\frac{\partial \Zcal_1}{\partial f_1}\quad &\ldots \quad &\frac{\partial \Zcal_1}{\partial f_N}\\
\vdots \quad &\ddots \quad &\vdots\\
\frac{\partial \Zcal_M}{\partial f_1}\quad &\ldots \quad&\frac{\partial \Zcal_M}{\partial f_N} 
\end{array}\right].
\end{equation*}
Recall that $\Abf =( \Ibf - \Gbf\diag{\fbf})$ and $\ubf = \Abf^{-1}\ubf_{\text{in}}$, hence both $\Abf$ and $\ubf$ are functions of $\fbf$. We omit such dependencies in our notation for brevity.
Following the chain rule of differentiation, 
\begin{align*}
\frac{\partial \Zcal_m}{\partial f_n} &= \frac{\partial}{\partial f_n} \sum_{i=1}^{N} H_{m,i} f_i u_i\\
& = H_{m,n} u_n + \sum_{i=1}^N \left[\frac{\partial u_i}{\partial f_n}\right] H_{m,i}f_i.
\end{align*}
Using the definition $\wbf = \Zcal(\fbf) - \ybf$ and summing over $m=1,...,M$,
\begin{align}
&\left[\grad\Dcal(\fbf)\right]_n = \sum_{m=1}^{M} \conj{\left[\frac{\partial \Zcal_m}{\partial f_n}\right]} w_m\nonumber\\
& = \conj{u_n}\sum_{m=1}^M \conj{H_{m,n}} w_m + \sum_{i=1}^{N} \conj{\left[\frac{\partial u_i}{\partial f_n}\right]} f_i \sum_{m=1}^M \conj{H_{m,i}} w_m\nonumber\\
& = \conj{u_n}\left[\Hbf^\H \wbf \right]_n + \sum_{i=1}^{N}  \conj{\left[\frac{\partial u_i}{\partial f_n}\right]} f_i \left[\Hbf^\H \wbf \right]_i,
\label{eq:grad_two_terms}
\end{align}
where $\conj{a}$ denotes the complex conjugate of $a\in\mathbb{C}$.
Label the two terms in \eqref{eq:grad_two_terms} as $T_1$ and $T_2$, then 
\begin{equation}
T_1 = \left[\diag{\ubf}^\H\Hbf^\H\wbf\right]_n,
\label{eq:T1}
\end{equation}
and
\begin{align}
T_2 &\overset{(a)}{=} (\ubf^{\text{in}})^\H \left[ \frac{\partial \Abf^{-1}}{\partial f_n}\right]^\H \diag{\fbf} \Hbf^\H\wbf\nonumber\\
&\overset{(b)}{=}-(\ubf^{\text{in}})^\H \Abf^{-\H} \left[\frac{\partial \Abf}{\partial f_n}\right]^\H \Abf^{-\H}\diag{\fbf}\Hbf^\H\wbf\nonumber\\
&\overset{(c)}{=} -\ubf^\H(\fbf) \left[\frac{\partial \Abf}{\partial f_n}\right]^\H \vbf \overset{(d)}{=} [\diag{\ubf}^\H\Gbf^\H\vbf]_n.\label{eq:T2}
\end{align}
In the above, step $(a)$ holds by plugging in $u_i= \left[ \Abf^{-1}\ubf_{\text{in}}\right]_i$. Step $(b)$ uses the identity 
\begin{equation}
\frac{\partial \Abf^{-1}}{\partial f_n}=-\Abf^{-1}\frac{\partial \Abf}{\partial f_n}\Abf^{-1},
\label{eq:derivInverseMatrix}
\end{equation}
which follows by differentiating both sides of $\Abf\Abf^{-1}=\Ibf$, 
\begin{equation*}
\frac{\partial \Abf}{\partial f_n}\Abf^{-1} + \Abf\frac{\partial \Abf^{-1}}{\partial f_n}=0.
\end{equation*}
From step $(b)$ to step $(c)$, we used the fact that $\ubf=\Abf^{-1}\ubf^{\text{in}}$ and defined
$\vbf :=  \Abf^{-\H}\diag{\fbf}\Hbf^\H\wbf$, which matches \eqref{eq:u_and_v}. Finally, step $(d)$ follows by plugging in $\Abf = \Ibf - \Gbf\diag{\fbf}$. Combining \eqref{eq:grad_two_terms}, \eqref{eq:T1}, and \eqref{eq:T2}, we have obtained the expression in \eqref{eq:gradient}. 

Note that the extension to the 3D vectorial field case \eqref{eq:3Dgradient} is straightforward. We highlight the differences from the scalar field case in the following.
Let $\widetilde{\Hbf}=[\widetilde{\Hbf}^{(1)} \vert \widetilde{\Hbf}^{(2)} \vert \widetilde{\Hbf}^{(3)}]$, where $\widetilde{\Hbf}^{(i)}\in\mathbb{C}^{M\times N}$ for $i=1,2,3$.
Following the chain rule of differentiation, we have
\begin{equation*}
\frac{\partial \widetilde{\Zcal}_m}{\partial \widetilde{f}_n} = \sum_{k=1}^3 \widetilde{H}^{(k)}_{m,n} \widetilde{u}^{(k)}_n + \sum_{k=1}^{3}\sum_{i=1}^N \left[\frac{\partial \widetilde{u}^{(k)}_i}{\partial \widetilde{f}_n}\right] \widetilde{H}^{(k)}_{m,i}\widetilde{f}_i.
\end{equation*}
Using the definition $\widetilde{\wbf}$ and summing over $m=1,...,M$,
\begin{align}
&\left[\grad\Dcal(\widetilde{\fbf})\right]_n = \sum_{m=1}^{M} \conj{\left[\frac{\partial \widetilde{\Zcal}_m}{\partial \widetilde{f}_n}\right]} \widetilde{w}_m = \sum_{k=1}^3 \conj{\widetilde{u}^{(k)}_n}\left[(\widetilde{\Hbf}^{(k)})^\mathsf{H}\widetilde{\wbf} \right]_n \nonumber\\
&+ \sum_{k=1}^3\sum_{i=1}^{N}  \conj{\left[\frac{\partial \widetilde{u}^{(k)}_i}{\partial \widetilde{f}_n}\right]} \widetilde{f}_i \left[(\widetilde{\Hbf}^{(k)})^\mathsf{H} \widetilde{\wbf} \right]_i.
\label{eq:3Dgrad_two_terms}
\end{align}
Label the two terms in \eqref{eq:3Dgrad_two_terms} as $T_1$ and $T_2$, then 
\begin{align}
T_1 &= \sum_{k=1}^{3}\left[\diag{\widetilde{\ubf}^{(k)}}^\mathsf{H}(\widetilde{\Hbf}^{(k)})^\mathsf{H}\widetilde{\wbf}\right]_n,\label{eq:3DT1}\\
T_2 &\overset{(a)}{=} \sum_{k=1}^3 \left(\left[ \frac{\partial \Abf^{-1}}{\partial f_n} \ubf^{\text{in}}\right]^{(k)}\right)^\mathsf{H} \diag{\fbf} (\Hbf^{(k)})^\mathsf{H}\wbf\nonumber\\
&\overset{(b)}{=} \sum_{k=1}^3 \left(\left[ -\widetilde{\Abf}^{-1} \frac{\partial \widetilde{\Abf}}{\partial \widetilde{f}_n} \widetilde{\ubf}\right]^{(k)}\right)^{\mathsf{H}}\diag{\widetilde{\fbf}}(\widetilde{\Hbf}^{(k)})^\mathsf{H}\widetilde{\wbf}\nonumber\\
&\overset{(c)}{=} \sum_{k=1}^{3} \left[-\widetilde{\ubf}^\mathsf{H} \left[\frac{\partial \widetilde{\Abf}}{\partial \widetilde{f}_n}\right]^\mathsf{H} \widetilde{\vbf}\right]^{(k)}\nonumber\\
&\overset{(d)}{=} \sum_{k=1}^3 \left[ \diag{\widetilde{\ubf}^{(k)}}^{\mathsf{H}} \widetilde{\Gbf}^\mathsf{H} [(k^2\Ibf+\Dbf^\mathsf{H})\widetilde{\vbf}]^{(k)}\right]_n.\label{eq:3DT2}
\end{align}
In the above, step $(a)$ follows from $\widetilde{u}^{(k)}_i = \left[ \widetilde{\Abf}^{-1}\widetilde{\ubf}^{\text{in}}\right]^{(k)}_i$. Step $(b)$ follows from \eqref{eq:derivInverseMatrix}.
In step $(c)$, we defined
$\widetilde{\vbf} :=  \widetilde{\Abf}^{-\mathsf{H}}(\Ibf_3\otimes\diag{\widetilde{\fbf}})\widetilde{\Hbf}^\mathsf{H}\widetilde{\wbf}$, which matches \eqref{eq:3Du_and_v}. 
Finally, step $(d)$ follows by plugging in the definition of $\widetilde{\Abf}$ in \eqref{eq:3DA_def}. 
Combining \eqref{eq:3Dgrad_two_terms}, \eqref{eq:3DT1}, and \eqref{eq:3DT2}, we have obtained the expression in \eqref{eq:3Dgradient}. 

%%%%%%%%%%%%%%%%%%%%%%%%%%%%%%%%%%%%%%%%%%%%%
%% proof 2
%%%%%%%%%%%%%%%%%%%%%%%%%%%%%%%%%%%%%%%%%%%%%

\subsubsection{Proof for Proposition \ref{prop:lip_grad}}
\label{app:prop_lipschitz_proof}

For a vector $\abf$ that is a function of $\sbf$, $\abf_i$ denotes the value of $\abf$ evaluated at $\sbf_i$.  Using this notation, we have
\begin{align*}
&\|\grad\Dcal(\sbf_1) - \grad\Dcal(\sbf_2)\| \leq \|\diag{\ubf_1}^\H\Hbf^\H\wbf_1 - \diag{\ubf_2}^\H\Hbf^\H\wbf_2\|\\
&\qquad+  \|\diag{\ubf_1}^\H\Gbf^\H\vbf_1 - \diag{\ubf_2}^\H\Gbf^\H\vbf_2\|.
\end{align*}
Label the two terms on the RHS as $T_1$ and $T_2$.
We will prove $T_1\leq L_1\|\sbf_1 - \sbf_2\|$ for some $L_1\in(0,\infty)$, and $T_2\leq L_2\|\sbf_1 - \sbf_2\|$ can be proved in a similar way for some $L_2\in(0,\infty)$. The result \eqref{eq:Lips_constant} is then obtained by letting $L=L_1 + L_2$.
\begin{align*}
T_1 &\leq \|\diag{\ubf_1}^\H\Hbf^\H\wbf_1 - \diag{\ubf_2}^\H\Hbf^\H\wbf_1\|\\
&+ \|\diag{\ubf_2}^\H\Hbf^\H\wbf_1 - \diag{\ubf_2}^\H\Hbf^\H\wbf_2\|\\
%&\leq \|\diag{\ubf_1 - \ubf_2}\|\|\Hbf\|\|\wbf_1\| + \|\diag{\ubf_2}\|\|\Hbf\|\|\zbf_1 - \zbf_2\|\\
&\leq \|\ubf_1 - \ubf_2\|\|\Hbf\|_{\text{op}}\|\wbf_1\| + \|\Abf_2^{-1}\|\|\ubf^{\text{in}}\|\|\Hbf\|_{\text{op}}\|\wbf_1 - \wbf_2\|,
\end{align*}
where $\|\cdot\|_{\text{op}}$ denotes the operator norm and the last inequality uses the fact that $\|\diag{\mathbf{d}}\|_{\text{op}}=\max_{n\in [N]}|d_n|\leq \|\mathbf{d}\|$. We now bound $\|\ubf_1-\ubf_2\|$ and $\|\wbf_1 - \wbf_2\|$.
\begin{align*}
&\|\ubf_1 - \ubf_2\| = \|\Abf_1^{-1}\ubf^{\text{in}} - \Abf_2^{-1}\ubf^{\text{in}}\|\leq \|\Abf_1^{-1}-\Abf_2^{-1}\|_\text{op}\|\ubf^\text{in}\|\\
&\quad= \|\Abf_1^{-1}(\Abf_2 -\Abf_1)\Abf_2^{-1}\|_{\text{op}}\|\ubf^\text{in}\|\\
&\quad \leq \|\Abf_1^{-1}\|_{\text{op}} \|\Gbf\|_{\text{op}} \|\sbf_1-\sbf_2\| \|\Abf_2^{-1}\|_{\text{op}} \|\ubf^{\text{in}}\|,\\
&\|\wbf_1 - \wbf_2\| \leq \|\Hbf\diag{\sbf_1}\ubf_1 - \Hbf\diag{\sbf_1}\ubf_2\|\\
&\quad+ \|\Hbf\diag{\sbf_1}\ubf_2 - \Hbf\diag{\sbf_2}\ubf_2\|\\
&\quad\leq \|\Hbf\|_\text{op}\|\sbf_1\|\|\ubf_1 - \ubf_2\| + \|\Hbf\|_\text{op}\|\sbf_1 - \sbf_2\|\|\Abf_2^{-1}\|_\text{op}\|\ubf^\text{in}\|.
\end{align*}
Then the result $T_1\leq L_1\|\sbf_1 - \sbf_2\|$ follows by noticing that $\|\sbf_1\|$, $\|\ubf^{\text{in}}\|$, $\|\Gbf\|_\text{op}$, $\|\Hbf\|_{\text{op}}$, and $\|\Abf^{-1}_{i}\|_\text{op}$ for $i=1,2$ are bounded, and the fact that $\|\wbf_1\|\leq\|\ybf\| + \|\Hbf\|_\text{op}\|\sbf_1\|\|\Abf_1^{-1}\|_\text{op}\|\ubf^{\text{in}}\|< \infty$.

The Lipschitz property of the gradient \eqref{eq:3Dgradient} in the 3D case can be proved in a similar way.

\subsubsection{Proof for Proposition \ref{prop:converge}}
\label{app:prop_converge_proof}

First, we show that the Lipschitz gradient condition \eqref{eq:Lips_constant} implies that for all $\xbf,\ybf\in\mathcal{U}$,
\begin{equation}
\Dcal(\ybf) - \Dcal(\xbf) - \langle \grad \Dcal(\xbf), \ybf - \xbf \rangle \geq -\frac{L}{2}\|\xbf - \ybf\|^2.
\label{eq:LipsGradProperty}
\end{equation}
Define a function $h:\mathbb{R}\to\mathbb{R}$ as $h(\lambda):=\Dcal(\xbf+\lambda(\ybf-\xbf))$, then $h'(\lambda)=\langle \grad \Dcal(\xbf+\lambda(\ybf-\xbf)),\ybf-\xbf\rangle$. Notice that $h(1)=\Dcal(\ybf)$ and $h(0)=\Dcal(\xbf)$. Using the equality $h(1)=h(0)+\int_0^1 h'(\lambda)d\lambda$, we have that
\begin{align*}
\Dcal(\ybf) &= \Dcal(\xbf) + \int_0^1 \langle \grad \Dcal(\xbf+\lambda(\ybf-\xbf)),\ybf-\xbf\rangle d\lambda\\
&= \Dcal(\xbf) + \int_0^1 \langle \grad\Dcal(\xbf), \ybf-\xbf\rangle d\lambda\\
&+\int_0^1 \langle \grad \Dcal(\xbf+\lambda(\ybf-\xbf)) - \grad \Dcal(\xbf),\ybf-\xbf\rangle d\lambda\\
&\overset{(a)}{\geq} \Dcal(\xbf) - \int_0^1 \lambda L\|\ybf-\xbf\|^2 d\lambda + \langle \grad\Dcal(\xbf), \ybf-\xbf\rangle\\
&=\Dcal(\xbf) - \frac{L}{2}\|\ybf-\xbf\|^2 + \langle \grad\Dcal(\xbf), \ybf-\xbf\rangle,
\end{align*}
where step $(a)$ uses Cauchy-Schwarz inequality and the Lipschitz gradient condition \eqref{eq:Lips_constant}.

Next, 
%{\color{red}from \eqref{eq:algo1}, we have for all $t\geq 0$,
%\begin{equation}
%\frac{\sbf_t-\fbf_t}{\gamma} - \grad\Dcal(\sbf_t) \in \partial \Rcal(\fbf_t).
%\label{eq:subgradient_by_algo}
%\end{equation}
by \eqref{eq:algo1}, we have that for all $\xbf\in \mathcal{U}$, $t\geq 0$,
\begin{equation}
\Rcal(\xbf) \geq \Rcal(\fbf_{t}) + \langle \frac{\sbf_{t}-\fbf_{t}}{\gamma} - \grad\Dcal(\sbf_{t}), \xbf - \fbf_{t} \rangle.
\label{eq:subgradConvexFunction}
\end{equation}
Let $\ybf = \fbf_{k}$, $\xbf = \fbf_{k+1}$ in \eqref{eq:LipsGradProperty} and $\xbf = \fbf_k$, $t=k+1$ in \eqref{eq:subgradConvexFunction}. Adding \eqref{eq:LipsGradProperty} and \eqref{eq:subgradConvexFunction}, we have
\begin{align}
&\mathcal{F}(\fbf_{k+1}) - \mathcal{F}(\fbf_{k}) \leq \langle \grad\Dcal(\fbf_{k+1}) - \grad\Dcal(\sbf_{k+1}),\fbf_{k+1}-\fbf_k \rangle\nonumber\\
& + \frac{1}{\gamma}\langle \sbf_{k+1} - \fbf_{k+1},\fbf_{k+1} - \fbf_{k} \rangle + \frac{L}{2}\|\fbf_{k+1} - \fbf_{k}\|^2 \nonumber\\
& \overset{(a)}{\leq} \frac{L}{2}\|\sbf_{k+1}-\fbf_{k+1}\|^2+\frac{L}{2}\|\fbf_{k+1} - \fbf_k\|^2 + \frac{1}{2\gamma}\|\sbf_{k+1}-\fbf_k\|^2\nonumber\\
& -\frac{1}{2\gamma}\|\sbf_{k+1}-\fbf_{k+1}\|^2 - \frac{1}{2\gamma}\|\fbf_{k+1}-\fbf_k\|^2 + \frac{L}{2}\|\fbf_{k+1}-\fbf_k\|^2\nonumber\\
&\overset{(b)}{\leq} \left(\frac{1}{2\gamma} - L\right)\|\fbf_k-\fbf_{k-1}\|^2 - \left(\frac{1}{2\gamma} - L\right)\|\fbf_{k+1} - \fbf_{k}\|^2\nonumber\\
&- \left( \frac{1}{2\gamma}-\frac{L}{2} \right) \|\sbf_{k+1}-\fbf_{k+1}\|^2. \nonumber
\end{align}
In the above, step $(a)$ uses Cauchy-Schwarz, Proposition \ref{prop:lip_grad}, as well as the fact that $2ab\leq a^2 + b^2$ and $2\langle \mathbf{a} - \mathbf{b} , \mathbf{b} - \mathbf{c} \rangle=\|\mathbf{a}-\mathbf{c}\|^2 - \|\mathbf{a} - \mathbf{b}\|^2 - \|\mathbf{b} - \mathbf{c}\|^2$. Step $(b)$ uses the condition in the proposition statement that $\gamma\leq \frac{1-\alpha^2}{2L}$ and \eqref{eq:algo3}, which implies $\|\sbf_{k+1}-\fbf_k\|\leq \alpha\frac{t_{k}-1}{t_{k+1}}\|\fbf_k - \fbf_{k-1}\|$, where we notice that $\frac{t_k-1}{t_{k+1}}\leq 1$ by \eqref{eq:algo2}, and $\alpha<1$ by our assumption.
Summing both sides from $k=0$ to $K$:
\begin{align*}
&\left( \frac{1}{2\gamma}-\frac{L}{2} \right)\sum_{k=0}^{K-1}\|\sbf_{k+1}-\fbf_{k+1}\|^2 \leq \mathcal{F}(\fbf_0) - \mathcal{F}(\fbf_{K})\nonumber\\
& + \left(\frac{1}{2\gamma} - L\right) \left(\|\fbf_0-\fbf_{-1}\|^2 - \|\fbf_{K} - \fbf_{K-1}\|^2\right)\leq\mathcal{F}(\fbf_0) - \mathcal{F}^*,
%\label{eq:sequence_bounded}
\end{align*}
where $\mathcal{F}^*$ is the global minimum. The last step follows by letting $\fbf_{-1}=\fbf_0$, which satisfies \eqref{eq:algo3} for the initialization $\sbf_1=\fbf_0$, and the fact that $\mathcal{F}^*\leq \mathcal{F}(\fbf_K)$. 

Recall the definition of the gradient mapping $\mathcal{G}_{\gamma}$ in \eqref{eq:gradient_mapping_def}, we have $\mathcal{G}_{\gamma}(\sbf_{k})=\frac{\sbf_k - \fbf_k}{\gamma}$. Therefore,
\begin{equation}
\sum_{k=1}^{K}\|\mathcal{G}_\gamma(\sbf_k)\|^2\leq \frac{2L\left(\mathcal{F}(\fbf_0) - \mathcal{F}^*\right)}{\gamma L(1 - \gamma L)},
\label{eq:partial_sum}
\end{equation}
which implies that $\lim_{K\to\infty}\sum_{k=1}^{K}\|\mathcal{G}_\gamma(\sbf_k)\|^2<\infty$, hence
\begin{equation}
\lim_{k\rightarrow\infty} \|\mathcal{G}_\gamma(\sbf_k)\|=0.
\label{eq:gradient_mapping_converge_s}
\end{equation} 
Note that \eqref{eq:gradient_mapping_converge_s} establishes that the gradient mapping acting on the sequence $\{\sbf_k\}_{k\geq 0}$ generated from relaxed FISTA converges to 0. In the following, we show that the gradient mapping acting on $\{\fbf_k\}_{k\geq 0}$ converges to 0 as well, which is the desired result stated in \eqref{eq:gradient_mapping_converge}.
By \eqref{eq:algo1} and \eqref{eq:gradient_mapping_def}, we have $\mathcal{G}_{\gamma}(\sbf_k)=\frac{1}{\gamma}\left(\sbf_k-\fbf_k\right)$. Therefore, \eqref{eq:gradient_mapping_converge_s} implies that $\lim_{k\to\infty}\|\sbf_k-\fbf_k\|=0$. Moreover, 
\begin{align}
&\|\mathcal{G}_{\gamma}(\fbf_k)-\mathcal{G}_{\gamma}(\sbf_k)\| \overset{(a)}{\leq} \frac{1}{\gamma}\|\sbf_k-\fbf_k\|\nonumber\\
&+ \frac{1}{\gamma}\|\prox_{\gamma\Rcal}(\fbf_k - \gamma\grad\Dcal(\fbf_k)) - \prox_{\gamma\Rcal}(\sbf_k - \gamma\grad\Dcal(\sbf_k))\|\nonumber\\
&\overset{(b)}{\leq} \frac{1}{\gamma}\|\sbf_k-\fbf_k\| + \frac{1}{\gamma}\|\fbf_k - \gamma\grad\Dcal(\fbf_k) - \left(\sbf_k - \gamma\grad\Dcal(\sbf_k)\right)\|\nonumber\\
&\overset{(c)}{\leq}\frac{1}{\gamma}\|\sbf_k-\fbf_k\| + \frac{1}{\gamma}\|\sbf_k-\fbf_k\| +  L\|\sbf_k-\fbf_k\|,
\label{eq:Gfs}
\end{align}
where step $(a)$ follows by \eqref{eq:gradient_mapping_def} and the triangle inequality, step $(b)$ follows by the well-known property that the proximal operators for convex functions are Lipschitz-1, and step $(c)$ follows by the triangle inequality and the result that $\grad\Dcal$ is Lipschitz-$L$ as we established in Proposition \ref{prop:lip_grad}.
Taking the limit as $k\to\infty$, we have $\lim_{k\to\infty}\|\mathcal{G}_{\gamma}(\fbf_k)-\mathcal{G}_{\gamma}(\sbf_k)\|=0$, hence, $\lim_{k\to\infty}\mathcal{G}_{\gamma}(\fbf_k)=\lim_{k\to\infty}\mathcal{G}_{\gamma}(\sbf_k)=0$.

Moreover, by \eqref{eq:Gfs},
\begin{equation*}
\|\mathcal{G}_{\gamma}(\fbf_k)\|-\|\mathcal{G}_{\gamma}(\sbf_k)\| \leq \|\mathcal{G}_{\gamma}(\fbf_k)-\mathcal{G}_{\gamma}(\sbf_k)\| \leq (2 + \gamma L) \|\mathcal{G}_{\gamma}(\sbf_k)\|,
\end{equation*}
hence, $\|\mathcal{G}_{\gamma}(\fbf_k)\|^2\leq  (3+\gamma L)^2\|\mathcal{G}_{\gamma}(\sbf_k)\|^2$ for all $k\geq 0$. Then by \eqref{eq:partial_sum},
\begin{equation*}
\sum_{k=1}^{K}\|\mathcal{G}_\gamma(\fbf_k)\|^2\leq (3+\gamma L)^2 \frac{2L\left(\mathcal{F}(\fbf_0) - \mathcal{F}^*\right)}{\gamma L(1 - \gamma L)},
\end{equation*}
which implies
\begin{equation*}
K\min_{k\in\{1,\ldots,K\}}\|\mathcal{G}_\gamma(\fbf_k)\|^2 \leq (3+\gamma L)^2\frac{2L\left(\mathcal{F}(\fbf_0) - \mathcal{F}^*\right)}{\gamma L(1 - \gamma L)},
\end{equation*}
hence
\begin{equation*}
\min_{k\in\{1,\ldots,K\}}\|\mathcal{G}_\gamma(\fbf_k)\|^2\leq (3+\gamma L)^2\frac{2L\left(\mathcal{F}(\fbf_0) - \mathcal{F}^*\right)}{K\gamma L(1 - \gamma L)}.
\end{equation*}

%%%%%%%%%%%%%%%%%%%%%%%%%%%%%%%%%%%%%%%%%%%%%
%% References
%%%%%%%%%%%%%%%%%%%%%%%%%%%%%%%%%%%%%%%%%%%%%

\bibliographystyle{IEEEtran}

% Generated by IEEEtran.bst, version: 1.14 (2015/08/26)

\end{document}